\newtheorem{thm}{Theorem}[section]
\newtheorem{lem}[thm]{Lemma}
\theoremstyle{definition}		
\newcommand{\squishlist}{
 \begin{list}{$\bullet$}
  { \setlength{\itemsep}{0pt}
     \setlength{\parsep}{3pt}
     \setlength{\topsep}{3pt}
     \setlength{\partopsep}{0pt}
     \setlength{\leftmargin}{1.5em}
     \setlength{\labelwidth}{1em}
     \setlength{\labelsep}{0.5em} } }
\newcommand{\squishend}{
  \end{list}  }
\definecolor{lightorange}{rgb}{1,0.98,0.95}
\definecolor{darkred}{rgb}{0.5,0,0}
\newcommand{\figAwidth}{1.75in}
\newcommand{\figBwidth}{4.5in}
\newcommand{\figCwidth}{3.2in}
\title{Key-Nets: Optical Transformation Convolutional Networks for Privacy Preserving Vision Sensors}
\author[1]{Jeffrey Byrne}
\author[2]{Brian DeCann}
\author[2]{Scott Bloom}
\affil[1]{Visym Labs, Cambridge MA, USA}
\affil[2]{Systems \& Technology Research, Woburn MA, USA}
\date{}
\begin{document}

\maketitle

\begin{abstract}
Modern cameras are not designed with computer vision or machine learning as the target application.  There is a need for a new class of vision sensors that are privacy preserving by design, that do not leak private information and collect only the information necessary for a target machine learning task.   In this paper, we introduce key-nets, which are convolutional networks paired with a custom vision sensor which applies an optical/analog transform such that the key-net can perform exact encrypted inference on this transformed image, but the image is not interpretable by a human or any other key-net.  We provide five sufficient conditions for an optical transformation suitable for a key-net, and show that generalized stochastic matrices (e.g. scale, bias and fractional pixel shuffling) satisfy these conditions.  We motivate the key-net by showing that without it there is a utility/privacy tradeoff for a network fine-tuned directly on optically transformed images for face identification and object detection. Finally, we show that a key-net is equivalent to homomorphic encryption using a Hill cipher, with an upper bound on memory and runtime that scales quadratically with a user specified privacy parameter. Therefore, the key-net is the first practical, efficient and privacy preserving vision sensor based on optical homomorphic encryption.
\end{abstract}

\section {Introduction}

Modern cameras are not designed with computer vision or machine learning as the target application.  Security cameras are designed for professionals performing a forensic video analysis task, such that full images of a scene are collected which contain much more information about a scene than may be necessary for a target computer vision task.  For example, a vision sensor with the task of face detection does not need images of nearby objects in the background, however traditional cameras collect imagery of the entire scene that can reveal much more information than intended.  This is especially true for imagery collected in private spaces such as homes or businesses.  Ideally, a vision sensor is privacy preserving, such that it 
never forms a human recoverable image that could leak information and violate end-user privacy if exposed without consent.

There is a need for novel design for visual sensors that are privacy preserving. Our goal is to replace traditional lens-based imaging system with a new visual sensor designed with novel diffractive or reflective optics, optimized for input to machine learning (ML) algorithms.  Existing cameras have been successfully used for machine learning, however in order to protect privacy, such cameras require digital encryption and are vulnerable to exploitation through third party eavesdropping which risks making private images public. 
Our objective is to develop a coupled vision sensor and ML system that: (i) does not create a human interpretable image, (ii) the ML system can perform inference directly on the sensor measurements, (iii) the ML system is ``keyed'' so that inference can only be performed on observations from the target sensor, (iv) the parameters of the ML system are encrypted and cannot be inspected or repurposed by an adversary and (v) images are encrypted and can only be recovered with knowledge of the secret key physically encoded in the optics.

\begin{figure*}[t!]
\centering
\includegraphics[width=\figAwidth]{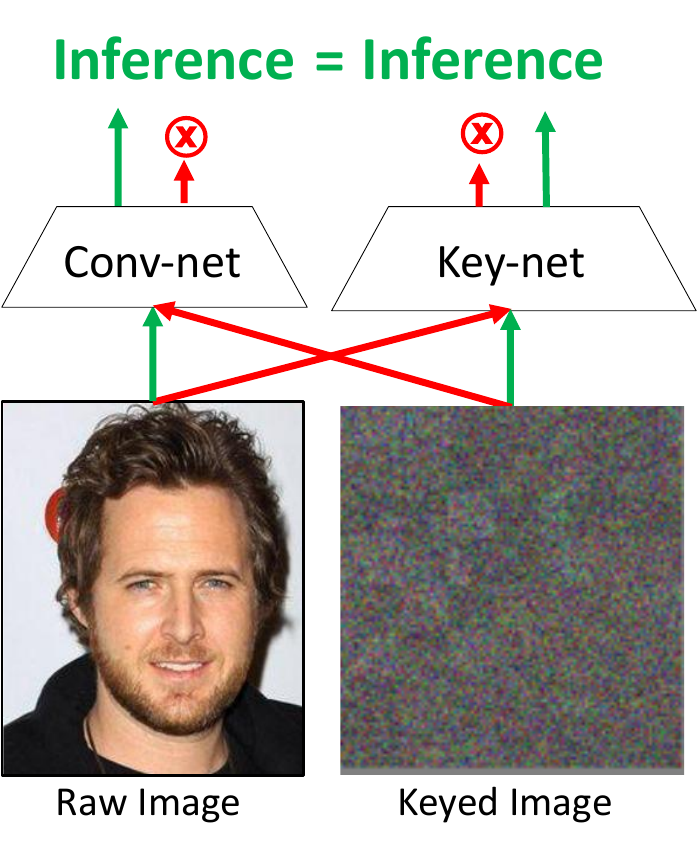}
\includegraphics[width=\figBwidth]{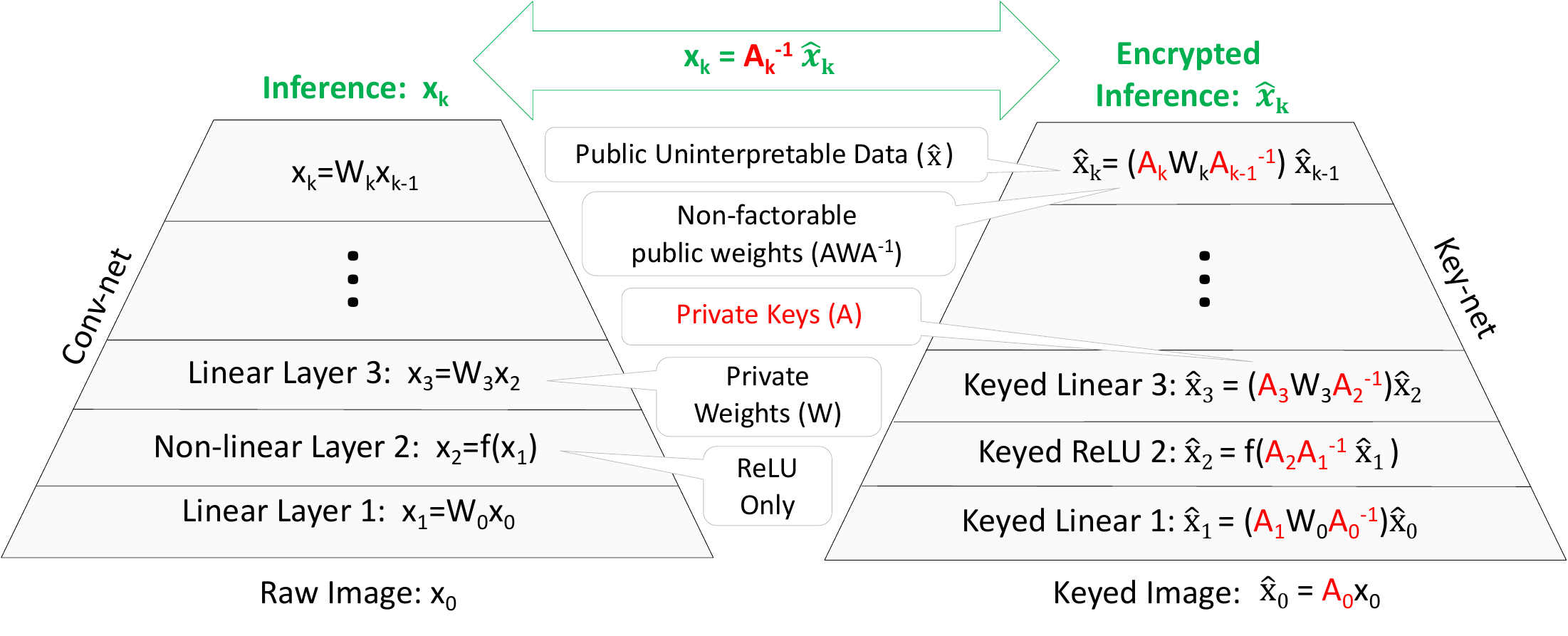}
\caption{(left) A  keynet  is  a  convolutional  network  paired  with  a custom vision sensor which optically transforms an image such that the keynet can perform exact inference on this transformed image.  The observed sensor measurement is not interpretable by a human (or any other keynet) without knowledge of the private key, which is encoded in the optics of the privacy preserving vision sensor.  
(right) A keynet is designed to perform inference on optical/analog encrypted inputs, by generating keyed layers from a source conv-net.  The output of the keynet is equivalent to the encrypted output of the source conv-net, without ever exposing the private image of the scene or the private weights of the conv-net.} 
\label{f:otcn_example}
\end{figure*}

In this paper, we introduce {\em keynets}.  A keynet is the combination of a novel vision sensor and a convolutional network designed specifically for this vision sensor.  The sensor incorporates an optical transformation in the optical imaging chain that transforms an image to be uninterpretable by a human.  However, the convolutional network can be designed to perform inference on this optically transformed measurement without requiring an inversion of the optical transformation.  This coupling of sensor and machine learning system enforces privacy preserving vision sensing because a human interpretable image is never constructed and only the keyed convolutional network can be used for inference on this sensor.  This forms a privacy preserving vision sensor that does not leak personal or private information and cannot be repurposed to another ML task. 

The key contributions of this paper are:

\begin{enumerate}[topsep=0pt,itemsep=-1ex,partopsep=1ex,parsep=1ex]
\item  {\em Optical transformation requirements.}  We describe five sufficient conditions for an optical transformation realizable in the optics of a sensor to enable the design of a keynet from a given source convolutional network.
\item {\em Keynet specification}.  We introduce the design of a keynet to perform inference on the optical/analog transformation of the vision sensor, and show that a feasible family of optical transformations is based on generalized stochastic matrices.  This parameterization includes a user specified privacy parameter ($\alpha$).  
\item {\em Optical element simulation}.  We show that the vision sensor can be physically realized using optical 3D printed fiber bundles and analog gain preprocessing, as shown by end-to-end simulated performance.     
\item {\em Optical homomorphic encryption.}  We show that a keynet is equivalent to a homomorphic encryption based on the Hill cipher \cite{Hill29}, which is physically realized in the optics of the vision sensor.  We describe two well known weaknesses of the Hill cipher and argue that while these weaknesses are present for a generic cryptosystem, they do not introduce a practical risk for a privacy preserving vision sensor.  
\end{enumerate}
\section {Related Work}

The related work can be broadly categorized into three areas:  visual sensors for machine learning, design of novel computational optics and homomorphic encryption.  

First, sensors for machine learning considers redesigning vision sensors for a targeted machine learning task.  An example is the movement of early layers of a convolutional network into the optical  \cite{barua2016direct}\cite{chen2016asp}\cite{Colburn19}\cite{Stork17} or analog processing \cite{Buckler_2017_ICCV}
in order to reduce power consumption. The trend towards optical processing has progressed to consider an all-optical convolutional network \cite{Lin18}, nanophotonic CNN systems \cite{Liu19}\cite{Zhang2019ArtificialNN} or optical and analog CNN hybrids \cite{Chang2018HybridOC} to address the challenges of non-linearities in CNN architectures.  Recently, there has also been work adapting adversarial learning principles for vision sensor design \cite{Chen18}\cite{Li19}\cite{liu2019better}\cite{Mirjalili19}\cite{pittaluga2019learning}\cite{Raval19}\cite{wu2018towards}\cite{xie2017controllable}. 
However, these privacy preserving approaches have demonstrated a clear (and undesirable) privacy/utility tradeoff, such that privacy increases at the expense of primary ML task performance.  Our goal is to enable privacy preserving visual sensing without this privacy/utility tradeoff.
 


The design and fabrication of novel visual sensors considers introducing new computational optics or in-sensor analog computation suitable for a target application.  For example, a coded aperture sensor 
\cite{adams2017single}\cite{antipa2018diffusercam}\cite{Asif17}\cite{boominathan2016lensless}\cite{cannon1980coded}\cite{dicke1968scatter}\cite{fenimore1978coded}\cite{Levin07} replaces the lens with phase masks realized as diffractive optical elements (DOEs) \cite{adams2017single}\cite{Asif17}\cite{sitzmann2018end}, such that imagery can be recovered using computational photography techniques. Such reconstruction-based approaches are also limited by a privacy/utility tradeoff \cite{canh2019deep}\cite{thorpe2013coprime}.  Recent approaches attempt to eliminate the reconstruction task, but are limited by strict camera assumptions and vision tasks \cite{dai2015towards}\cite{davenport2007smashed}\cite{wang2019privacy}\cite{zhao2019active} or design sensors that produce partially human-interpretable images \cite{browarek2010high}\cite{kwan2019deep}\cite{kwan2019multiple}\cite{pittaluga2015privacy}\cite{pittaluga2016pre}\cite{pittaluga2016sensor}\cite{zhang2014anonymous}.
Sensor design has considered angle sensitive \cite{chen2016asp}\cite{Wang2011AnAC} or differential \cite{Wang12} pixels to compute precise motion or angle distribution of the light field, and single photon avalanche diodes \cite{Lee15}\cite{Satat2016LenslessIW} for ultrafast observations. 
Recent work on 3D printing based on two-photon lithography \cite{Ye18} has made possible mass production of custom optical elements at large scales.  Our goal is to leverage this capability to design novel privacy preserving optical elements. 


Homomorphic encryption (HE) \cite{Boddeti2018SecureFM}\cite{Gentry2009FullyHE}\cite{Juvekar2018GAZELLEAL}\cite{Nandakumar2019TowardsDN} is a form of encryption which allows specific types of computations to be carried out on ciphertext and generate an encrypted result which, when decrypted, matches the result of operations performed on the plaintext.  Homomorphic encryption has been applied to convolutional networks to perform computations on encrypted images in: CryptoNets \cite{Chou2018FasterCL}\cite{GiladBachrach2016CryptoNetsAN}, FHE-DiNN \cite{Bourse2017FastHE}, cryptoDL \cite{Hesamifard2017CryptoDLDN}, MiniONN \cite{Liu2017ObliviousNN} and Homomorphic CNNs (HCNNs) \cite{Badawi19}.  These approaches suffer from: inefficient runtime performance, integer discretized weights \cite{Bourse2017FastHE}\cite{GiladBachrach2016CryptoNetsAN}, limited network depth due to increasing noise effects \cite{Badawi19}, polynomial approximations to non-linear activation layers \cite{GiladBachrach2016CryptoNetsAN} or exhibit only partially homomorphic encryption for only additive or multiplicative computations \cite{Brutzkus2018LowLP}\cite{Juvekar2018GAZELLEAL}\cite{Ryffel2019PartiallyEM}.  Our goal is to enable optical homomorphic encryption that does not suffer from these restrictions, but with weaker guarantees on security that we will argue is an appropriate tradeoff for a visual sensor.


\section {Keynets}
\label{s:key_nets}

A {\em keynet} is an optically transformed convolutional network that can perform inference on data collected using a specifically designed sensor.  In this section, we describe requirements for optical transformation ($\S$\ref{s:optical_transformation}) and network construction ($\S$\ref{s:otcn}-\ref{s:stochastic_key_nets}) and optical realization ($\S$\ref{s:optical_realization}).  Privacy analysis is provided in supplemental material ($\S$\ref{s:privacy_analysis}).

Figure \ref{f:otcn_example} shows an comparison of a keynet and a conv-net.  In this example, there is a raw image vectorized to $(x_0)$ which is input to a $k$-layer convolutional network.  This network is composed as $x_k=\mathcal{N}(x_0)$, includes linear and non-linear layers, such that linear layers are represented as a sparse Toeplitz matrix ($W$) and the network outputs inference result $x_k$.   The keynet uses private layer keys $A_i$ to transform the network weights $\hat{W}=AWA^{-1}$, such that the source weights cannot be factored to recover either $A$ or $W$.  The keynet is paired with a custom vision sensor that physically realizes the private image key $A_0$ in an optical and analog transformation chain. Finally, we will show that if the non-linear layers of the source conv-net are limited to ReLU, then the keynet can operate on the transformed input $\hat{x}_k=\hat{\mathcal{N}}(\hat{x}_0)$ and $\hat{x}_k=A_kx_k$.  We call this approach {\em optical homomorphic encryption}.  

Keynets assume the following public and private information.  First, the image key $A_0$ is secret.  The physical sensor containing image key $A_0$ is secret, and controlled with physical security (e.g. in a locked room).  The source convolutional network $\mathcal{N}$ is secret.  The keyed convolutional network $\mathcal{\hat{N}}$ is public.  Optically transformed images ($A_0x$) are public, and raw images ($A_0^{-1}A_0x$) are only recoverable with the secret image key.  Output inference results can be either public or private ($\S$\ref{s:stochastic_key_nets}), and if private can only be recovered knowing the secret embedding key, $A_k$.  
The keyed convolutional network $\mathcal{\hat{N}}$ cannot be used to recover $A$ or $W$, due to the hardness of non-negative matrix factorization ($\S$\ref{s:nmf}).  Therefore, an adversary would be able to observe only the encrypted inference result $\mathcal{\hat{N}}(A_0x)$, an uninterpretable image ($A_0x$) and the keyed layer weights $(\hat{W})$ but not the raw image ($x_0$) or the weights of the source network ($W$) or the raw inference $(x_k)$. 


\subsection {Optical Transformation Function}
\label{s:optical_transformation}


Consider a family of transformation functions $\mathcal{F}$.  A transformation function $f \in \mathcal{F}$ must satisfy the following five sufficient feasibility conditions to be considered an optical transformation function:


\begin{enumerate}[topsep=0pt,itemsep=-1ex,partopsep=1ex,parsep=1ex]
    \item {\em Linear}.  The function $f$ must be linear ($f=A$).
    \item {\em Invertible}.  Matrix $A$ must be positive definite. 
    \item {\em Non-negative}.  $A \geq 0$ for all matrix elements. 
    \item {\em Commutative.}  There exists a non-linear activation function $g$ that is commutative with $A$, such that $A(g(A^{-1}x))= g(AA^{-1}x)=g(x)$.
    \item {\em Sparse.}  Given an $A \in \mathcal{F}$ and $B^{-1} \in \mathcal{F}$, there exists an upper bound such that for any sparse matrix $W$, the product $AWB^{-1}$ is sparse with $|AWB^{-1}|_0 \leq s |W|_0$.
\end{enumerate}

Condition 1 states that the transformation function must be linear, since optical image formation can be modeled as a linear transformation.  Note that this is not a necessary condition, as optical propagation can include non-linear effects due to non-linear optics or diffraction.  Condition 2 states that the transformation is lossless and the original image can be recovered by $A^{-1}Ax$.    
Condition 3 limits a matrix to be physically realizable as a linear optical element  
and closely connects the proposed framework with the computational complexity of non-negative matrix factorization.  Condition 4 enables inference in optically encrypted convolutional networks with non-linear activation function layers.  Finally, Condition 5 ensures that the end-to-end inference in the optically encrypted convolutional network is efficient and does not require the product of an infeasibly large dense matrix.  A family of transformation functions $\mathcal{F}$ is defined to be an optical transformation function if all members of the family satisfy the five feasibility conditions.


\subsection {Optical Transformation Convolutional Networks}
\label{s:otcn}

Consider a convolutional network $\mathcal{N}(x)$ which is the composition of $\mathcal{N}_k$ layerwise functions:
\begin{equation}
\mathcal{N}(x) = \mathcal{N}_k(\mathcal{N}_{k-1}(...\mathcal{N}_1(x)))
\end{equation}
\noindent Given an optical transformation function $A$ and a raw image $x$, $\hat{x}=Ax$ is the optical transformation of the raw image into a sensor observation $\hat{x}$.  First, we will consider the case where $\mathcal{N}$ is linear only, then we will extend to consider a full conv-net with non-linear layers.   

Consider the case where all layers are linear.  In this case, layers $\mathcal{N}_i = W$ are given by a weight matrix $W$ which encodes the linear transformation of a trained convolutional network.  For example, in a typical conv-net, linear layers include convolutional, affine, fully connected, dropout and average pooling layers.  Note that since a convolution is a linear operation, it can be represented as a matrix in the form of a sparse Toeplitz matrix, where the kernel is replicated rowwise.  Furthermore, multi-channel tensor inputs can be flattened to a vector $x$, such that the linear transformation of the layer is the matrix product of a sparse weight matrix and dense data vector.  Finally, note that without loss of generality, a bias $b$ can be applied by projective embedding $x=[x~1]^T$ and affine augmentation $[W~b;0~1]$.   Then, the conv-net simplifies to a matrix product:
\begin{eqnarray}
\mathcal{N}(x;W) = \prod_k W_k x
\label{e:otcn_1}
\end{eqnarray}

\noindent where the notation $\hat{N}(x;W)$ corresponds to network $\mathcal{N}$ with input $x$ and parameters $W$.  Given an optical transformation function $A$, the input $Ax$ can be trivially input to the convolutional network as $\mathcal{N}(A^{-1}Ax)$ by inverting the data prior to inference.  However, this requires exposing the image to the network.  An ideal network would be able to perform inference directly on the optically transformed input, $Ax$, without requiring inversion.    

The linear convolutional network can be constructed to operate on optical transformed inputs as follows.  Linear layers can be replaced by {\em keyed layers} $\hat{W}=AWA^{-1} $ using secret layer keys $A_i$, and a secret image key $A_0$ such that:   
\begin{eqnarray}
\mathcal{N}(x;W) = A_kW_K\ldots (A_2W_2A^{-1}_1)(A_1W_1A^{-1}_0)A_0x 
\label{e:otcn_2}
\end{eqnarray}
\noindent Recall that the keys $A$ and inverse $A^{-1}$ exist by conditions 1 and 2.  By associativity, terms ($A^{-1}A=\text{I}$) cancel, so it follows that (\ref{e:otcn_2}) is equivalent to (\ref{e:otcn_1}).  Furthermore, by associativity, terms can be grouped into the product $\hat{W}=AWA^{-1}$.  Condition 3 requires that elements of $A$ are non-negative, which enables a proof that the factorization of $\hat{W}$ is equivalent to non-negative matrix factorization, which is NP-hard in general ($\S$\ref{s:nmf}).  This protects recovery of $A$ from $\hat{W}$.  Finally, by condition 5, the product $AWA^{-1}$ is sparse, and is at most a factor of $\alpha$ less sparse than $W$.  This bounds the complexity of matrix multiplication $\hat{W}x$ to be at most $\alpha$ times slower than $Wx$.  This enables a practical linear convolutional network operation that preserves sparsity of $\hat{W}$ does not require operations on an impossibly large dense matrix.   

\begin{figure*}[t!]
\includegraphics[width=\textwidth]{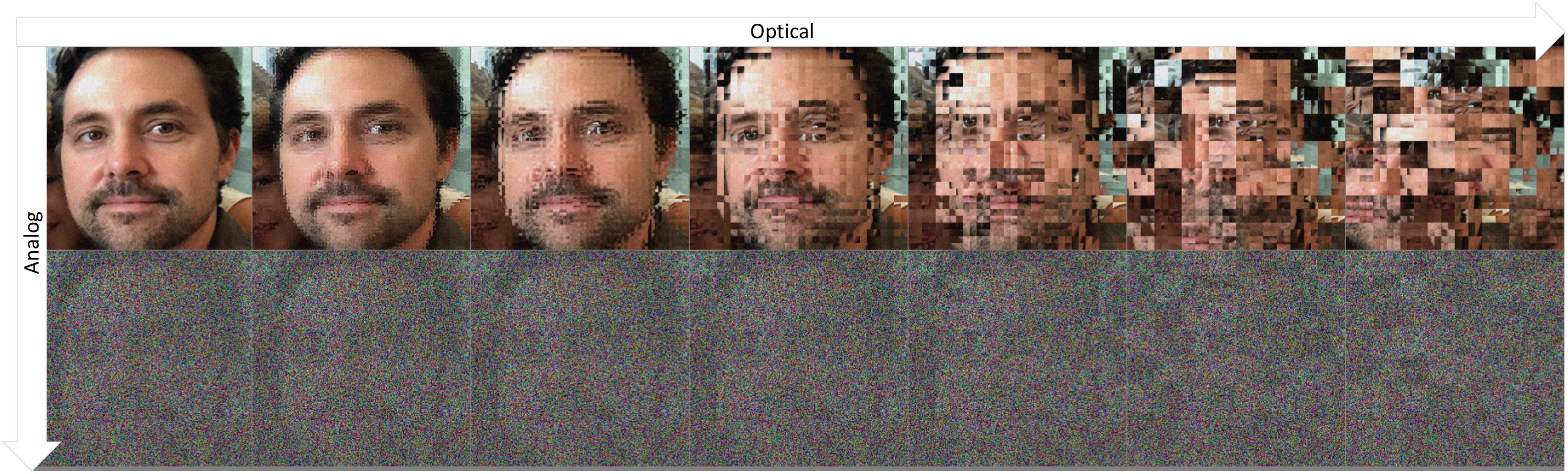}
\caption{Generalized Doubly Stochastic Matrices combine optical and analog processing to form a human uninterpretable image (bottom right), while preserving the flexibility for partially interpretable images (middle column) that is similar to etched ``optical privacy glass''.} 
\label{f:gsm_examples}
\end{figure*}

Let $\mathcal{N}(Ax;\hat{W})$ be the shorthand notation for the {\em keyed network} formed from keyed layers $\hat{W}$ on transformed input $A_0x$, such that: 
\begin{eqnarray}
\mathcal{N}(A_0x;AWA^{-1}) = \prod_k \hat{W}_k A_0x
\label{e:otcn_3}
\end{eqnarray}
\noindent Finally, by associativity, it follows that $A_k\mathcal{N}(x;W) = \mathcal{N}(A_0x;AWA^{-1})$, since (\ref{e:otcn_3}) is shorthand for (\ref{e:otcn_2}) and we previously showed that (\ref{e:otcn_2}) is equivalent to (\ref{e:otcn_1}).  This is a {\em homomorphism}, such that the transformed conv-net output on the original image $x$ is equivalent to the keyed conv-net output on the optical transformed image $A_0x$.

Next, consider a non-linear activation function $g$.  An activation function is a non-linear function typically used in a convolutional network structure to generate the output of a node given inputs.  For example, common non-linear activation layers include rectified linear unit (ReLU), tanh and sigmoid.
By condition 4, we assume that there exists a function $g$ such that $g$ and $A$ are commutative:
\begin{eqnarray}
Ag(A^{-1}\hat{x}) = g(AA^{-1}\hat{x}) = g(\hat{x})
\end{eqnarray}
\noindent This assumption simplifies the non-linear layer to operate directly on the input, which allows the non-linear layer to be included in the keynet without modification.  

Finally, any combination of linear and commutative non-linear layers can be composed into a keynet as follows. 
\begin{eqnarray}
\mathcal{N}_i = \begin{cases}
    \mathcal{N}_i(x_{i-1}, ~A_{i}WA^{-1}_{i-1}) & \text{if~} \mathcal{N}_i \text{~linear} \\
    \mathcal{N}_i(x_{i-1}) & \text{if~commutative non-linear} 
  \end{cases} 
  \label{e:otcn_4}
\end{eqnarray}

\noindent This shows that the keyed network can be constructed with an optical transformation function $A$ to enable the homomorphism $A_k\mathcal{N}(x;W) = \mathcal{N}(A_0x;AWA^{-1})$.  Inference in the keyed network on the optical transformation is equivalent to inference in the source network on the raw image.  This construction enables efficient inference (condition 5) and is secure from recovery of $A$ (condition 3).  Therefore, $\mathcal{N}(Ax;AWA^{-1})$ is a homomorphic encryption scheme for inference of linearly encrypted images in a keyed conv-net.  We call this {\em optical homomorphic encryption}.

\subsection{Generalized Doubly Stochastic Matrices}
\label{s:generalized_doubly_stochastic_matrices}


Section \ref{s:optical_transformation} specified the five conditions for a feasible optical transformation function.  Section \ref{s:otcn} showed that a feasible optical transformation function can be used to construct a convolutional network that operates directly on optical transformed input.  In this section, we show that the family of generalized doubly stochastic matrices satisfies the conditions of an optical transformation function, for choice of activation function $g=\text{ReLU}$.


A doubly stochastic matrix is defined as follows.  First, a permutation matrix or {\em monomial matrix} $\Pi$ is a square matrix that has exactly one entry of one in each row and each column and zero elsewhere, and is constructed by permuting the rows of an identity matrix.
A doubly stochastic matrix is a non-negative matrix such that each row and column sums to one, that encodes a ``soft'' permutation.  It is well known (i.e. the Birkhoff–von Neumann theorem) that every doubly stochastic matrix can be decomposed into a convex combination of permutation matrices.  A generalized doubly stochastic matrix has arbitrary non-zero entries without requiring the rows and columns to sum to one. 
This matrix can be defined as the product of a diagonal matrix $D$ and a doubly stochastic matrix defined as a convex combination of $\alpha$ permutation matrices:
\begin{equation}
    P = \text{D} \sum_{i \leq \alpha} \theta_i \Pi_i
    \label{e:DP}
\end{equation}
\noindent where $\Pi=\sum_i \theta_i \Pi_i$ such that $\sum_i \theta_i = 1$, $\theta \geq 0$.  The convex coefficients $\theta$ are selected to enforce that $\Pi$ is positive definite.  The parameter $\alpha$ encodes the ``softness'' of the stochastic matrix such that larger $\alpha$ is more stochastic, and $\alpha=1$ is equivalent to a permutation matrix.  Furthermore, observe that $D$ can be extended to encode an (optional) additive bias $b$ through an affine augmentation as $[D~b;0~1]$.  The term $\text{D}$ encodes an elementwise multiplicative scaling and additive bias or {\em photometric} degradation, while the term $\Pi$ encodes a pixelwise fractional shuffling or {\em geometric} degradation.

A generalized doubly stochastic matrix satisfies the five conditions of an optical transformation function (\textsection \ref{s:optical_transformation}).  

\begin{enumerate}[topsep=0pt,itemsep=-1ex,partopsep=1ex,parsep=1ex]
    \item {\em Linear.}  $P$ is a linear function as represented by a square matrix $\text{D}\Pi$.
    \item {\em Invertible}.  $P$ is positive definite if and only if both $D$ and $\Pi$ are positive definite.  A sufficient condition for $P$ to be positive definite is selecting $\theta$ in (\ref{e:DP}) such that $\Pi$ is diagonally dominant, and enforcing $\text{diag(D)}>0$.   
    \item {\em Non-negative}. $\Pi$ is non-negative by construction.  If $\text{D}$ is restricted to have strictly positive diagonal entries $\text{diag(D)}>0$, then $P$ is both non-negative and positive definite.  
    \item {\em Commutative}.  Let $g(x)=\text{ReLU}(x)$ and $A$ be restricted to a generalized permutation matrix (e.g. $\alpha=1$ for eq. \ref{e:DP}).  Given this restriction, lemma \ref{p:commutative} in the supplementary material provides a proof of commutativity. 
    \item {\em Sparse}. Given an $\alpha$, there exists a selection of $A \in P$ and $B^{-1} \in P$ such that the product $|AWB^{-1}|_0 \leq \alpha^2 |W|_0$, which is an upper bound on sparsity for $s=\alpha^2$.  Lemma \ref{p:sparsity} in the supplementary material provides proof of this sparsity upper bound. 
\end{enumerate}

\subsection{Stochastic Keynets}
\label{s:stochastic_key_nets}

Section \ref{s:optical_transformation} specified the five conditions for a feasible optical transformation function.  Section \ref{s:otcn} showed that a feasible optical transformation function can be used to construct a convolutional network that operates directly on optical transformed input.  Section \textsection \ref{s:generalized_doubly_stochastic_matrices}) showed that the family of generalized doubly stochastic matrices (e.g. ``soft'' permutation matrices) satisfied the conditions of an optical transformation function, for choice of activation function $g=\text{ReLU}$.  A {\em stochastic keynet} is defined as the selection of doubly stochastic matrices for keying and ReLU for non-linear activation.

Figure \ref{f:gsm_examples} shows examples of generalized stochastic matrices.  The horizontal scale shows optical transformations for increasingly random shuffling due to doubly stochastic matrices.  The vertical scale shows analog transformations for increasingly large gains due to the diagonal matrices.  The combination of these two effects results in a transformed sensor measurement in the bottom right that is uninterpretable to a human observer.  

Construction of a stochastic keynet is as follows:
\begin{enumerate}[topsep=0pt,itemsep=-1ex,partopsep=1ex,parsep=1ex]
    \item Select a pre-trained source conv-net $\mathcal{N}$ that contains only linear and ReLU layers, and a user selected privacy parameter $\alpha$ on $\mathcal{F}_\alpha$.
    \item Randomly select a secret image key $A_0 \in \mathcal{F}_\alpha$.  This is physically realized in the optical and analog imaging chain for a vision sensor as described in section \ref{s:optical_realization}.
    \item If layer $\mathcal{N}_i$ is convolutional, randomly select secret layer key $A_i \in \mathcal{F}_\alpha$.  Convert convolutional kernel to a sparse Toeplitz matrix and keyed layer following (\ref{e:otcn_2}).  If the convolution includes a bias term, perform an affine augmentation of the Toeplitz matrix as: $[W~b; 0~1]$, with projective embedding of input tensor $[x;1]$.  If the layer includes a downsampling stride, the layer keys encode the proper shape.   
    \item If $\mathcal{N}_i$ is ReLU, randomly select secret layer key $A_i \in \mathcal{F}_{\alpha=1}$ such that $A_i$ is restricted to be a scaled permutation matrix.  Transform the input $g(A_iA_{i-1}^{-1}x)$. 
    \item \label{e:output_layer} If $\mathcal{N}_k$ is the output layer, select embedding key $A_k=\text{I}$ if the inference result is public data, else randomly select $A_k \in \mathcal{F}_\alpha$ if the inference is private data.
    \item Compose the stochastic keynet $\mathcal{\hat{N}}(A_0x;~AWA^{-1})$ from $\mathcal{N}(x)$ following (\ref{e:otcn_4}).
\end{enumerate}

The stochastic keynet has two restrictions on allowable conv-net topologies.  The only non-linear layer supported by this network is ReLU or ReLU-like variants (e.g. Leaky-ReLU, Parametric ReLU), as this activation function is commutative with the proposed optical transformation function.  All other non-linear layers are unallowable including: max-pooling, local response normalization (LRN), sigmoid, tanh and softmax.  However, all-convolutional networks have shown that replacing max-pooling with average pooling and limiting activation functions to ReLU maintains state-of-the-art performance \cite{Springenberg15}.  We experimentally validate this claim in section \ref{s:results}.  


Finally, the keynet exhibits a tradeoff between privacy and memory.  A naive Toeplitz matrix construction has $O(N^2K)$ additional parameters than the source network for a layer input tensor of size $(N,N)$ with $K$ channels.  However, these replicated channels are highly compressible.  In our supplemental software, we introduce a sparse matrix format that leverages repeated submatrices as ``tiles''.   In general, the keynet memory requirements scale as $O(\alpha^2 K T)$, given an additional tiling factor $T$ dependent on the sparse matrix storage format.  We show keynet memory requirements as a function of privacy parameter $\alpha$ in section \ref{s:results}.

\subsection{Optical Realization}
\label{s:optical_realization}

The sufficient conditions for an optical transform in section \ref{s:optical_transformation} define a feasible family of transformations for use in a privacy preserving vision sensor.  In the supplemental material (\textsection \ref{s:optical_realization_supplemental}), we show that the selected family of optical transforms based on generalized stochastic matrices can be physically realized using an analog and optical processing chain based on {\em 3D printed incoherent fiber bundle faceplates}.  An optical fiber bundle faceplate is an optical element constructed using a bundle of multi-micron-diameter optical fibers bundled into a thin plate with polished faces.  A simulated example is shown in Figure \ref{f:optical_realization}.


\begin{figure*}[t!]
\includegraphics[width=\textwidth]{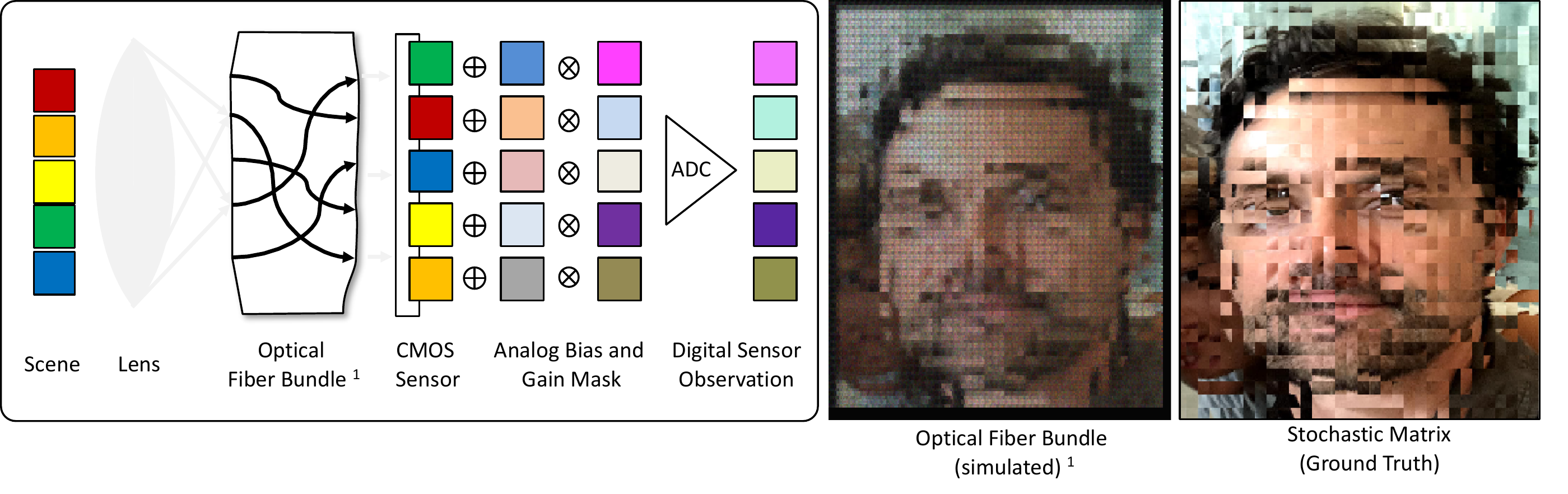}
\caption{Simulation of a 3D printed optical fiber bundle \cite{Ye18} and analog preprocessing to realize a generalized doubly stochastic matrix. } 
\label{f:optical_realization}
\end{figure*}

\section {Experimental Results}
\label{s:results}

A privacy preserving vision sensor must consider the joint design of the sensor and the ML system.  To justify this claim, we consider the following three experiments:

\begin{enumerate}[wide, labelwidth=!, labelindent=10pt]
    \item \label{e:experimental_analysis_1} {\em Frozen System}.  Does there exist an optical transformation that degrades the input image, while preserving performance of a pre-trained ML system?  If such a transformation exists, then a keynet would be unnecessary, since a conv-net could be applied directly to the degraded image, and the degraded image would not be human interpretable.  Experimental results show that the maximum degradation for a pre-trained network to minimize human perception \cite{Rozsa2016AreFA}\cite{Zhou04} while preserving network performance is still clearly human observable.  This provides evidence that preserving image privacy requires joint design of the ML system and the transformation.  See supplemental material (section \ref{s:nullspace}), for detailed results.  
    \item \label{e:experimental_analysis_2} {\em Trained System}.  Can we jointly train an optical transformation and a conv-net to maximally degrade an image while minimizing an ML task loss?  This would also render a keynet unnecessary, as fine-tuning a conv-net on degraded images would suffice.
    Experimental results show that jointly learning an image degradation and a network encoding using an adversarial loss can sufficiently degrade an image to render it uninterpretable by a human.  However, this strategy introduces an undesirable privacy/utility tradeoff where face identification performance degrades by 12\% and object detection degrades by 55\%.  This provides further evidence that preserving image privacy requires keying to preserve the source conv-net performance. See supplemental material (section \ref{s:adversarial_learning} ) for detailed results.  
    \item {\em Keyed System}.  What is the simulated performance of the keynet and proposed vision sensor from Section \ref{s:optical_realization}?  To demonstrate proof of concept of the proposed keynet, we have implemented key-net construction as outlined in section \ref{s:stochastic_key_nets} in PyTorch.  This prototype software exhibits exact inference performance to within floating point error.  Furthermore, we simulated the keynet optical element shown in Figure \ref{f:optical_realization} with simulation strategy described in detail in the supplemental material (section \ref{s:optical_realization_supplemental}).  Results are shown in table \ref{f:keynet_results} for the keynets for three baseline conv-nets.
\end{enumerate}

\begin{figure}[t!]
	\centering
	\includegraphics[width=\textwidth]{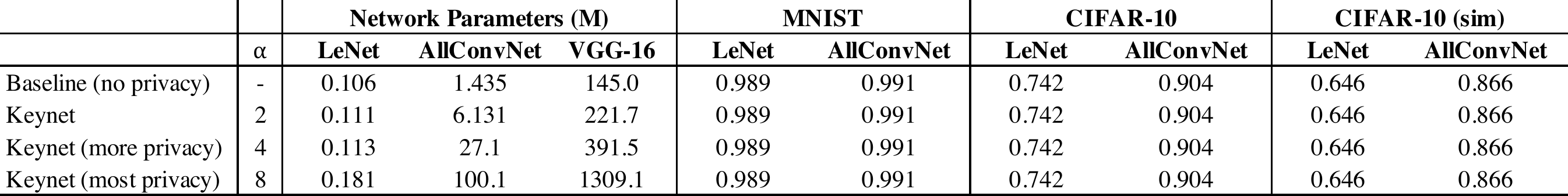}
	\caption{Keynet results.  Model parameters and classification accuracy on raw and optically simulated images for a small (lenet), medium (allconvnet) and large (VGG-16) conv-net and a keynets with increasing privacy $\alpha=\{2,4,8\}$.}
    \label{f:keynet_results}
\end{figure}

\vspace{-0.2cm}
Table \ref{f:keynet_results} shows experimental for three conv-net topologies: 5-layer LeNet, an 11-layer All-Convolutional network \cite{Springenberg15} and a VGG-16 network \cite{Simonyan15}.  All networks were constructed replacing max-pooling with average pooling, as per the keynet requirements.  Results show the keynet memory requirements for a small (LeNet), medium (AllConvNet) and large (VGG-16) conv-net as a function of the privacy parameter ($\alpha$). Naive implementation of the Toeplitz matrices in (\ref{e:otcn_3}) results in an inefficient row-wise replication of the convolutional kernel.  In our supplemental software, we introduce a new tiled sparse matrix format which provides compression of repeated submatrices.  This results in a memory requirement for the keyed network on the order of 4-8x larger than the unkeyed network, depending on the selection of the privacy parameter $\alpha$.  Next, we trained keynets on MNIST \cite{Lecun98} and CIFAR-10 \cite{Krizhevsky09}, using raw images, or simulated optically transformed images from Section \ref{s:optical_realization}.  Results show that the keynet achieves exact inference performance with the baseline, and that the optical simulation results in slightly degraded performance due to minor blurring of the image from fiber cross-talk.  This provides proof of concept in simulation for the keynet optical element.  

\section {Conclusions}
\label{s:conclusions}

In this paper, we introduced keynets, which are the first practical optical homomorphic encryption scheme for the design of privacy preserving vision sensors.  Our experimental results justify next steps which include: a comprehensive study of keynet semantic security as a function of privacy parameter $\alpha$, GPU optimization of sparse tiled matrix-vector multiplication to improve runtime and creation and testing of a prototype optical element.   Keynet software for reproducible research is available for download at \href{https://visym.github.io/keynet}{https://visym.github.io/keynet}.  This includes two prize challenge images and public keynets for attack (\textsection \ref{s:keynet_challenge}) to encourage collaborative discovery of weaknesses in keynet security.

\medskip
\noindent {\bf Acknowledgement.}  This material is based upon work supported by the Defense Advanced Research Projects Agency (DARPA) under Contract No. HR001119C0067.

{\small
\bibliographystyle{ieee}
\bibliography{janus,enigma}
}
\clearpage

\appendix

\section {Supplemental Material}

\subsection{Notation}
\begin{table}[ht]
\begin{tabular}{ll}
$I$ & Input image  \\
$x$ & Input tensor  \\
$\mathcal{F}$ & Optical transform family \\
$\hat{x}$ & Keyed tensor \\
$\hat{x}_0$ & Vectorized optically transformed image \\
$x_0$ & Vectorized raw image \\
$W$ & Layerwise linear transformation \\
$\hat{W}$ & Keyed layer, $AWA^{-1}$ \\
$\mathcal{N}$ & Compositional conv-net function \\
$\mathcal{N}_i$ & Layerwise conv-net function \\
$\mathcal{N}(x;W)$ & Conv-net with input $x$ and parameters $W$ \\
$\mathcal{\hat{N}}$ & Key-net with input $Ax$ and parameters $AWA^{-1}$\\
$A$ & Optical transformation matrix \\
$g$ & Non-linear activation function \\
$|\cdot|_0$ & $L_0$ norm \\
$\alpha$ & User specified privacy parameter \\
$\mathcal{F}_\alpha$ & $\mathcal{F}_\alpha \subseteq \mathcal{F}$, given privacy parameter $\alpha$ \\
$\text{D}$ & Photometric analog elementwise gain and bias \\
$\Pi$ & Stochastic matrix, geometric optical shuffling
\end{tabular}
\end{table}

\subsection{Proof of Commutativity}

\begin{lem}
The function composition $f(g(h(x)))=P^{-1}(\text{ReLU}(Px))$ is commutative for generalized permutation matrix $P=D \Pi$ with permutation matrix $\Pi$ and diagonal matrix $D$, if $\text{D} \geq 0$.
\label{p:commutative}
\end{lem}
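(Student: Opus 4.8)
The plan is to read ``commutative'' in the sense of Condition 4, i.e.\ to establish $P^{-1}(\text{ReLU}(Px)) = \text{ReLU}(P^{-1}Px) = \text{ReLU}(x)$. Since $\text{ReLU}(P^{-1}Px)=\text{ReLU}(x)$ holds trivially once $P$ is invertible, the entire lemma collapses onto the single intertwining identity $\text{ReLU}(Px)=P\,\text{ReLU}(x)$; left-multiplying this by $P^{-1}$ then delivers the stated form. I would therefore spend the proof showing that $\text{ReLU}$ passes through a generalized permutation matrix unchanged, and treat the $P^{-1}$-conjugation as a one-line corollary.

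The engine is that $\text{ReLU}$ acts coordinatewise and is positively homogeneous, meaning $\max(ct,0)=c\max(t,0)$ for every scalar $c\ge 0$. First I would record that $\text{ReLU}$ commutes with the permutation factor: because $\Pi$ merely reorders coordinates while $\text{ReLU}$ applies the identical scalar map to each coordinate, $\text{ReLU}(\Pi x)=\Pi\,\text{ReLU}(x)$. Next I would record that $\text{ReLU}$ commutes with the nonnegative diagonal $D$: writing $D=\text{diag}(d_1,\dots,d_n)$ with each $d_i\ge 0$, positive homogeneity gives $\text{ReLU}(Dy)_i=\max(d_i y_i,0)=d_i\max(y_i,0)=(D\,\text{ReLU}(y))_i$, hence $\text{ReLU}(Dy)=D\,\text{ReLU}(y)$. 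Composing the two facts with $y=\Pi x$ yields
\begin{equation}
\text{ReLU}(Px)=\text{ReLU}(D\Pi x)=D\,\text{ReLU}(\Pi x)=D\Pi\,\text{ReLU}(x)=P\,\text{ReLU}(x),
\end{equation}
which is exactly the intertwining identity needed.

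The main obstacle is not the algebra but pinning down precisely where each hypothesis enters, which is also what makes the statement more than a formality. Positive homogeneity is a strictly one-sided property: $\max(ct,0)=c\max(t,0)$ fails for $c<0$, so the diagonal-commutation step is valid only because $D\ge 0$, which is exactly the stated assumption; a single negative diagonal entry would break it. Separately, the passage from $\text{ReLU}(Px)=P\,\text{ReLU}(x)$ to the final conjugated form requires $P^{-1}$ to exist, which is furnished by the invertibility condition of Section \ref{s:optical_transformation} (namely $\text{diag}(D)>0$ together with $\Pi$ positive definite). I would close by noting that the argument uses nothing about $\text{ReLU}$ beyond coordinatewise action and nonnegative homogeneity, so it extends verbatim to Leaky-ReLU and Parametric ReLU, consistent with the topology claims in Section \ref{s:stochastic_key_nets}.
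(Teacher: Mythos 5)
Your proposal is correct and follows essentially the same route as the paper's proof: both decompose $P=D\Pi$ and rely on exactly the same two facts, namely positive homogeneity of $\text{ReLU}$ against the nonnegative diagonal $D$ and coordinatewise action against the permutation $\Pi$. The only cosmetic difference is direction — you pull the forward factors out ($\text{ReLU}(Px)=P\,\text{ReLU}(x)$, then left-multiply by $P^{-1}$), whereas the paper pushes the inverse factors $D^{-1}$ and $\Pi^{-1}$ inward through $\text{ReLU}$ — and your explicit remark that invertibility requires $\text{diag}(D)>0$ (strictly) is a point the paper's proof uses only implicitly.
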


\begin{proof}
Let $g(x)=\text{ReLU}(x)$ amd $f(x)=P^{-1}x$ and $h(x)=Px$.  The function composition $f(g(h(x))) = P^{-1}(\text{ReLU}(Px))$ is commutative if the equivalence relation $f(g(x))) = g(f(x))$ holds.  Given a diagonal matrix $D \geq 0$ (i.e. has non-negative entries), the product $P=D \Pi$ for permutation matrix $\Pi$ is non-negative, since permutation matrices are monomial and the product of non-negative matrices is non-negative.  The function $y=\text{ReLU}(x)$ is computed elementwise as $y_i = \mathtt{max}(0,~x_i)$.  Observe that any non-negative scale factor $\beta$ is commutative such that $\text{ReLU}(\beta x_i) = \mathtt{max}(0,~\beta x_i) = \beta y_i = \beta~\text{ReLU}(x_i)$, since a non-negative scaling does not change the sign of $x_i$.  This can be written in matrix notation with $\beta$ on the diagonal of $D$, then $\text{ReLU}(Dx) = D~\text{ReLU}(x)$.
Furthermore, since $\text{ReLU}(x)$ is computed elementwise and $\Pi$ is a one-to-one mapping (e.g. a permutation), $\Pi^{-1} \text{ReLU}(\Pi x) = \text{ReLU}(x)$.  Therefore, $\Pi^{-1}D^{-1} \text{ReLU}(D \Pi x) = \Pi^{-1} \text{ReLU}(D^{-1} D \Pi x) = \text{ReLU}(\Pi^{-1}D^{-1} D \Pi x) = \text{ReLU}(x).$
\end{proof}

\subsection{Proof of Sparsity Bound}

\begin{lem}
Given a sparse matrix $W$ and any $A \in P$ and $B \in P$ where $P$ is the family of generalized doubly stochastic matrices with privacy parameter $\alpha$, there exists a sparsity upper bound $|AWB|_0 \leq \alpha^2 |W|_0$.
\label{p:sparsity}
\end{lem}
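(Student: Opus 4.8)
The plan is to reduce the two-sided product $AWB$ to a sum of $\alpha^2$ row-and-column permutations of $W$ and then apply subadditivity of the $L_0$ count. First I would invoke the defining decomposition of the generalized doubly stochastic family and write $A = D_A \Pi_A$ and $B = D_B \Pi_B$, where $D_A, D_B$ are diagonal matrices with strictly positive diagonals (guaranteed by the invertibility/non-negativity conditions, $\text{diag}(D) > 0$) and $\Pi_A = \sum_{i=1}^{\alpha} \theta_i^A \Pi_i^A$, $\Pi_B = \sum_{j=1}^{\alpha} \theta_j^B \Pi_j^B$ are each convex combinations of $\alpha$ permutation matrices.

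The first observation is that multiplication by a diagonal matrix with nonzero diagonal entries does not change the nonzero pattern of a matrix: left multiplication rescales rows and right multiplication rescales columns, and neither operation can create or destroy a nonzero. This lets me strip both diagonal factors and isolate the only terms that can actually increase sparsity, namely the two ``soft permutation'' matrices:
\begin{equation}
|AWB|_0 = |D_A \Pi_A W D_B \Pi_B|_0 = |\Pi_A (W D_B) \Pi_B|_0 = |\Pi_A W' \Pi_B|_0,
\end{equation}
where $W' = W D_B$ satisfies $|W'|_0 = |W|_0$.

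Next I would expand the bilinear product over the two convex combinations,
\begin{equation}
\Pi_A W' \Pi_B = \sum_{i=1}^{\alpha} \sum_{j=1}^{\alpha} \theta_i^A \theta_j^B \, \Pi_i^A W' \Pi_j^B .
\end{equation}
Each summand $\Pi_i^A W' \Pi_j^B$ is obtained from $W'$ by permuting its rows (via $\Pi_i^A$) and its columns (via $\Pi_j^B$); since a permutation matrix is monomial, this merely relocates each nonzero entry, so $|\Pi_i^A W' \Pi_j^B|_0 = |W'|_0 = |W|_0$. The sum ranges over $\alpha^2$ index pairs.

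Finally I would apply subadditivity of the $L_0$ count, $|\sum_k M_k|_0 \le \sum_k |M_k|_0$, which holds because the support of a sum is contained in the union of the supports; combining the $\alpha^2$ terms, each with exactly $|W|_0$ nonzeros, gives $|\Pi_A W' \Pi_B|_0 \le \alpha^2 |W|_0$, and together with the diagonal-stripping step this yields the claim. I do not anticipate a serious obstacle. The only points demanding care are justifying that the diagonal factors are genuinely nonzero on the diagonal (so that sparsity is preserved exactly rather than merely bounded), and noting that although the convex weights $\theta_i^A \theta_j^B$ are non-negative, the entries of $W'$ need not be, so overlapping nonzeros could cancel; hence the argument must rely on the union-of-supports bound rather than assume the $\alpha^2 |W|_0$ relocated nonzeros remain distinct. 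Cancellation can only lower the count, so the upper bound is unaffected.
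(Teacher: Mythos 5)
Your proposal is correct and follows essentially the same route as the paper's proof: expand $A$ and $B$ into their diagonal and convex-combination-of-$\alpha$-permutations factors, discard the diagonals as sparsity-preserving, and bound the support of the resulting sum of $\alpha^2$ row-and-column-permuted copies of $W$. The only cosmetic difference is that the paper first decomposes $W$ into $|W|_0$ single-nonzero matrices before counting, whereas you apply subadditivity of the $L_0$ count directly to the $\alpha^2$ whole-matrix terms; your version is, if anything, slightly cleaner in explicitly justifying the diagonal-stripping step and in noting that cancellation can only help the upper bound.
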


\begin{proof}
Let $W_k$ be a sparse matrix with exactly one non-zero element, then the decomposition $W=\sum_k W_k$ such that if $|W|_0=N$ then the decomposition has $N$ terms.  Then, for any conformal matrices $A$ and $B$, $AWB = \sum_k AW_kB$.  Since $A \in P$ and $B \in P$, there exists a decomposition $A=D\sum_i\theta_i\Pi_i$ (resp. $B=D\sum_j\theta_j\Pi_j$).  The sparsity pattern is upper bounded as $|A|_0 \leq |\sum_i\theta_i\Pi_i|_0$, when setting $D=I$.  Each term $AW_iB$ can be expanded into exactly $\alpha^2$ terms of the form $\sum_{i,j} \theta_i \theta_j \Pi_i W_k  \Pi_j$.  The product $\Pi_i W_k \Pi_j$ is a permutation of $W_k$ with sparsity $|\Pi_i W_k \Pi_j|_0=1$.  Therefore, the sum of $\alpha^2$ terms will have at most $\alpha^2$ non-zero elements for every non-zero element in $W$, hence $|AWB|_0 \leq \alpha^2 |W|_0$.
\end{proof}

\subsection{Keynet Example}
\label{s:keynet_example}

Figure \ref{f:otcn_example_supplemental} shows a simple example of a key-net.  In this example, there is a 2x2 raw image vectorized into a $4 \times 1$ vector $(x_0)$ which is input to a two level convolutional network.  This network includes a convolutional layer with kernel $[-1,1]$ (or equivalently a Toeplitz matrix $W_1$), followed by a ReLU layer.
The output of this two layer convolutional network is a vector $[1,0,1,0]^T$.  
The key-net uses private keys $A_1$ and $A_2$ to transform the input and network weights, such that the weights $\hat{W}_1$ cannot be factored to recover either $A$ or $W$.  The key-net operates on the transformed input $\hat{x}_0$ which is observed in a custom designed vision sensor such that $A_1$ is equivalent to a physically realizable optical and analog transformation chain.  Inference in a key-net operates equivalently to the conv-net with transformed weights of the form $\hat{W}=AWA^{-1}$, such that the key-net output is a vector $\hat{x}_2=A_2x_2$.  
This output is equivalent to the conv-net output, encrypted such that $x_2 = A_2^{-1}\hat{x}_2$.  
This is a homomorphism, enabled by an optical transformation $A_1$.

\begin{figure*}
\includegraphics[width=\textwidth]{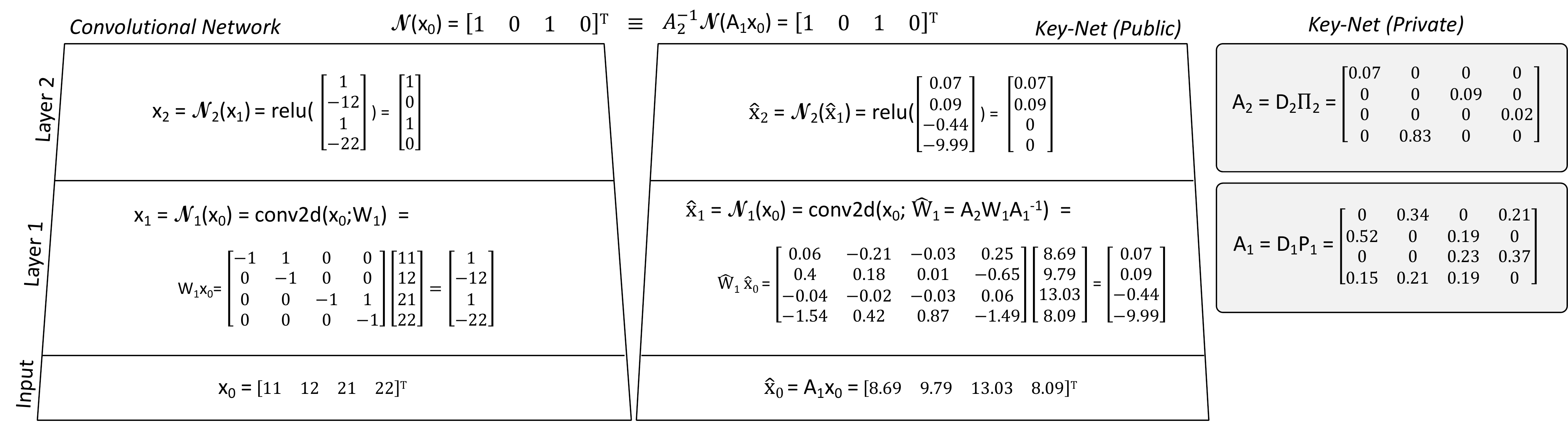}
\caption{Optically Transformed Convolutional Networks. In this example, a 2x2 image $[11,12;21,22]$ is input to a 2 level convolutional network with a convolutional layer and ReLU layer, forming an inference $x_2$. The key-net is constructed from the conv-net using the private keys $A$, such that $A_1$ is a linear transformation implemented in the optics and analog processing of a custom vision sensor forming the sensor observation $\hat{x}_0$. This optically transformed sensor measurement is input to the key-net with output $\hat{x}_2$.  The result of a forward pass in the conv-net is $x_2=A_2^{-1}\hat{x}_2$, however the raw image $x_0$ is never observed or recovered to perform inference in the key-net.  } 
\label{f:otcn_example_supplemental}
\end{figure*}

\subsection{Optical Realization}
\label{s:optical_realization_supplemental}

The sufficient conditions for an optical transform in section \ref{s:optical_transformation} define a feasible family of transformations for use in a privacy preserving vision sensor.  In this section, we show that the selected family of optical transforms based on generalized stochastic matrices can be physically realized using an analog and optical processing chain based on {\em 3D printed incoherent fiber bundle faceplates}.  

An optical fiber bundle faceplate is an optical element constructed using a bundle of multi-micron-diameter optical fibers bundled into a thin plate with polished faces.  
An incoherent faceplate consists of fiber optic strands that are shuffled and rotated so that the faceplace will unfaithfully transmit an image from one face to the other, but in a deterministic manner.  Recent work has demonstrated that optical fiber faceplates can be constructed using 3D printing of thermoplastic filaments \cite{Ye18}.  This enables large scale manufacturing for design of privacy preserving vision sensors.  

Figure \ref{f:optical_realization} shows the design of the optical element to realize a generalized stochastic matrix.  In this design strategy, a lens focuses the light field of the scene onto the optical fiber bundle.  This fiber bundle is designed to implement the doubly stochastic matrix, which shuffles observed pixels and re-transmits them to an alternate location, which is then observed by the CMOS sensor.  Next, during pixel readout, analog preprocessing applies an analog bias and gain.  The resulting pixel readouts are converted from analog to digital (ADC) forming the observed sensor measurement.  The combination of the fiber bundle to implement the stochastic matrix fractional shuffling ($\Pi$) and the analog processing to implement the pixelwise multiplicative scaling and additive bias ($\text{D}$) results in a physical realization of the optical transformation in (\ref{e:DP}). Figure \ref{f:optical_realization} (right) shows the optical simulation of the fiber bundle for a mild permutation, without analog effects for visualization purposes.  Simulation details are described in Section \ref{s:optical_realization_supplemental}.

A 3D printed incoherent fiber bundle faceplate includes the following primary design variables.  First, 3D printed optics are {\em air clad} such that each fiber strand is separated from neighboring strands by an air gap.  This introduces crosstalk due to cladding leakage between fiber strands which reduces the optical transmission fidelity.  Second, 3D printed optics exhibits a {\em minimum fiber diameter} which limits the minimum size of the each optical strand.  This minimum dimension is specified by the diameter of the 3D print head, which is on the order of 100$\mu$m on modern printers.  This is two orders of magnitude larger than a pixel pitch on a CMOS sensor, which requires that each strand covers a pixel neighborhood.  Finally, fiber optic transmission is specified by total internal reflection (TIR), which introduces a cone of projection from the end of the fiber to the CMOS sensor.  This introduces {\em mixed pixels} where the observed intensity is a mixture of the contribution from all neighboring fibers.  Figure \ref{f:optical_realization} (right) shows examples of these modeling errors which must be addressed during sensor calibration.  

In Section \ref{s:results}, we addressed these modeling errors by simulating the fiber optic bundle using parameterization demonstrated by Wang et al. \cite{Ye18} and re-training the key-net to be invariant to these physically realizable effects.   In the remainder of this section, we will describe the simulation of the optical element and the CMOS sensor to simulate the physically realized optical transformation.  

\subsubsection{Optical Simulation}

An optical fiber bundle is simulated as follows. An image of arbitrary size is input to the simulation tools.  The image pixel size is designer defined.  Next a padded mask is defined that is slightly larger than the input image size, the pad size is designer selectable.  The designer then sets the fiber core dimension in the row direction and separately in the column direction.  The simulator allows for a designer defined open area to cladding ratio which allows for image information to be lost due to non-imaging areas in the fiber bundle.  Cladding and fiber core sizes are converted to number of pixels (using pixel size defined above).  A matrix of the centroids of each fiber core in the bundle is initialized.  There is an option for the designer to set a shearing factor which simulates manufacturing tolerances on the array of fiber cores used to form the bundle.  A masking matrix is defined such that for areas of the bundle entered on each centroid matrix element and within the defined fiber core diameter light is transmitted, all other areas are blocked to a designer defined value.  The individual fibers are arranged in a brick-like pattern, i.e. the core centers are offset by one half of the core diameter as the rows go down. The image is then masked with the core and interstitial matrices. The script then rasters through the image and fiber bundle to see which parts of the image fall within allowed fiber cores and which parts are masked.  All parts of the image that fall within a given core are intensity averaged which sets the image resolution to be that of the fiber core size.  Lastly the designer can set crosstalk parameters for both the row and column directions of the bundle which enables the designer to input manufacturing tolerances and/or use of blocking materials between fibers.  The crosstalk value operates like a kernel where the core image intensity is replaced by the vertical crosstalk factor times the sum of the four nearest neighbor vertical core elements plus the horizontal crosstalk factor times the sum of the two nearest neighbor horizontal core elements, normalized to the image maximum pixel value,  for every fiber core in the bundle. This composite image is then taken as the input to the camera noise model defined below.

Figure \ref{f:usaf_simulation} shows an example of this simulation.  User configurable parameters for the fiber bundle simulation are:
\begin{enumerate}[topsep=0pt,itemsep=-1ex,partopsep=1ex,parsep=1ex]
\item Image size
\item  Fiber core size row, column
\item Fiber core/cladding area ratio
\item Fiber bundle shearing factor
\item Fiber interstitial blocking factor
\item Vertical and horizontal fiber crosstalk coefficients
\end{enumerate}

\subsubsection{CMOS Sensor Simulation}

The sensor noise model begins with a given photon intensity hitting a given pixel, this can be set by scaling the input image.  The mean number of photons is given by 
$\mu_{ph}$
and assumes Poisson statistics to calculate the shot noise 
$\sigma^2_{ph}$
(in the limit of large numbers of photons we can use a Gaussian approximation to the Poisson distribution, this should be the case for this system).  The sensor has a defined quantum efficiency depending on the wavelength, sensor materials, and sensor construction geometry, denoted by 
$\nu$
This then gives the number of photo electrons generated in the pixel 
$\mu_e$
which also follow Poisson statistics as:
$\mu_e = \nu \mu_p$.  
Since the statistics are Poisson the variance is also 
$\mu_e$.
In addition to shot noise, the sensor also has a dark noise, i.e. photoelectrons are generated even in the absence of an optical signal.  Usually this dark current is integration time (and temperature) dependent and is due to thermally induced electrons.  The mean dark count is given by a constant term, 
$\mu_0$ 
and the integration time dependent term 
$\mu_I \times t_{int}$
where $\mu_d$
is the sum of these two terms.  Since the thermally induced electrons are also Poisson distributed the dark count variance can be written as:
$\sigma^2_d=\sigma^2_{d0}+\mu_It_{int}$.

Finally, the sum of these sources of photoelectrons charge a capacitor which turns the signal into a voltage, this gets amplified by a gain stage G and then is ultimately converted into a digital signal by the ADC.  This process is assumed to be linear and the camera usually has some over system gain $G_{sys}$ that converts electrons to digital counts out.  The final signal is then given by:
$\mu_{gs} = G_{sys}(\mu_e+\mu_d)$.

Since the signal model is linear and the noise sources are independent, we can RSS the noise sources.  As shown above the readout noise and amplifier noise can be lumped into a dark noise variance
$\sigma^2_d$
and finally there is usually a noise associated with the final ADC stage, 
$\sigma^2_q$
that typically has a uniform distribution and is some camera dependent fraction of the digital scale output. Performing the RSS we get the total camera noise as:
$\sigma^2_{gs} = G_{sys}^2 \sigma^2_d + \sigma^2_q+G_{sys}^2\mu_e$

Adjustable parameters for the camera noise simulation are:

\begin{enumerate}[topsep=0pt,itemsep=-1ex,partopsep=1ex,parsep=1ex]
\item 	Sensor pixel size and number – microns, $N \times N$
\item 	Sensor quantum efficiency - $\nu$
\item 	Sensor dark noise – $\mu_d$, $\sigma_d^2$
\item 	Sensor integration time - $t_{int}$
\item 	Sensor gain on a per pixel basis specified by a gain matrix -  $G_{ij}$
\item 	ADC resolution and noise – depth,  $\sigma^2_{q}$
\end{enumerate}

As an example, the input image and  final output image  of the fiber bundle simulator and camera noise models is displayed in Figure \ref{f:usaf_simulation}.

\begin{figure}[t!]
	\centering
	\includegraphics[width=\textwidth]{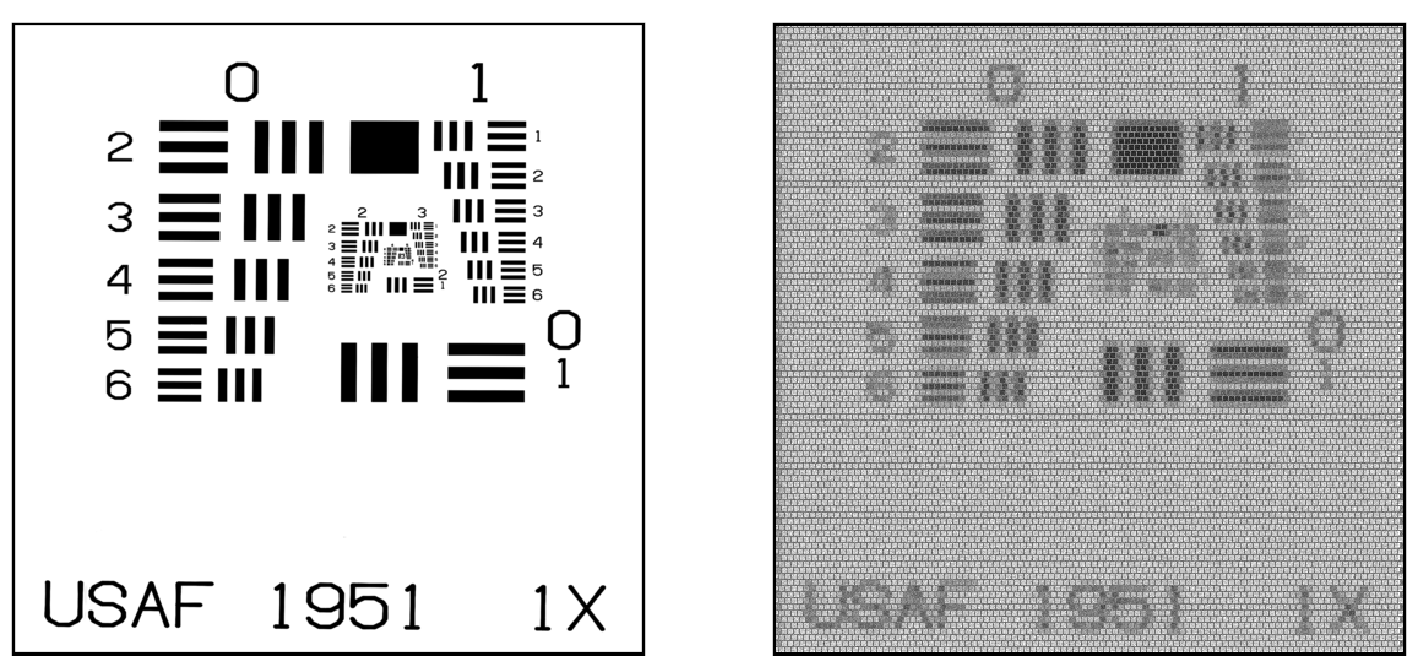}
	\caption{Optical Simulation of the 3D printed fiber bundle.}
    \label{f:usaf_simulation}
\end{figure}


\subsection{Image Transformation Experiments}

A privacy preserving vision system must obscure the image data from the perspective of a human observer, while allowing for vision based tasks.  In this section, we provide ancillary experiments that demonstrate that off-the-shelf convolutional networks (and architectures) are subject to a privacy-performance trade-off, which is a limiting factor compared to the keynet architecture. 

In our experiments, we consider face identification and (later) object detection as the target machine learning tasks.  Face identification experiments were performed using the VGG-16 network architecture, with a pre-trained model \cite{parkhi2015deep}.  Experiments for object detection were conducted using a PyTorch implementation of the Faster R-CNN object detector \cite{RenHGS15} trained on the MS-COCO dataset \cite{Lin2014MicrosoftCC}.  Where applicable, network weights were finetuned using the "training" subset and metrics are reported using the "validation" subset. To report the identification performance of baseline and learned networks, we utilize the Rank-1 classification accuracy.  We acknowledge the limitations of the Rank-1 classification accuracy as a measure of matching performance \cite{matey2015modest}, but for the purposes of this work it is only used as as an indicator of acceptable performance from the matcher, rather than a benchmark.  Object detection performance was measured using average precision over 80 classes, using the standard COCO evaluation metrics \cite{Lin2014MicrosoftCC}.  Human perceptual loss in images is measured by the Structural Similarity Index (SSIM) \cite{wang2004image}.  The motivation for SSIM as a measurement to assess human observable changes in differences is owed to the fact that humans are much more likely to perceive structural changes in images.  Therefore, the SSIM is used as a surrogate for image reconstruction fidelity between a reference (i.e., unmodified, "clean" image) and an optically transformed image.  SSIM is a single-valued measurement and is defined in the range $[0,1]$, where $0$ denotes no similarity (privacy preserving) and $1.0$ denotes the images are equivalent (not privacy preserving). 

\begin{figure*}[t!]
    \centering
    \includegraphics[width=\textwidth]{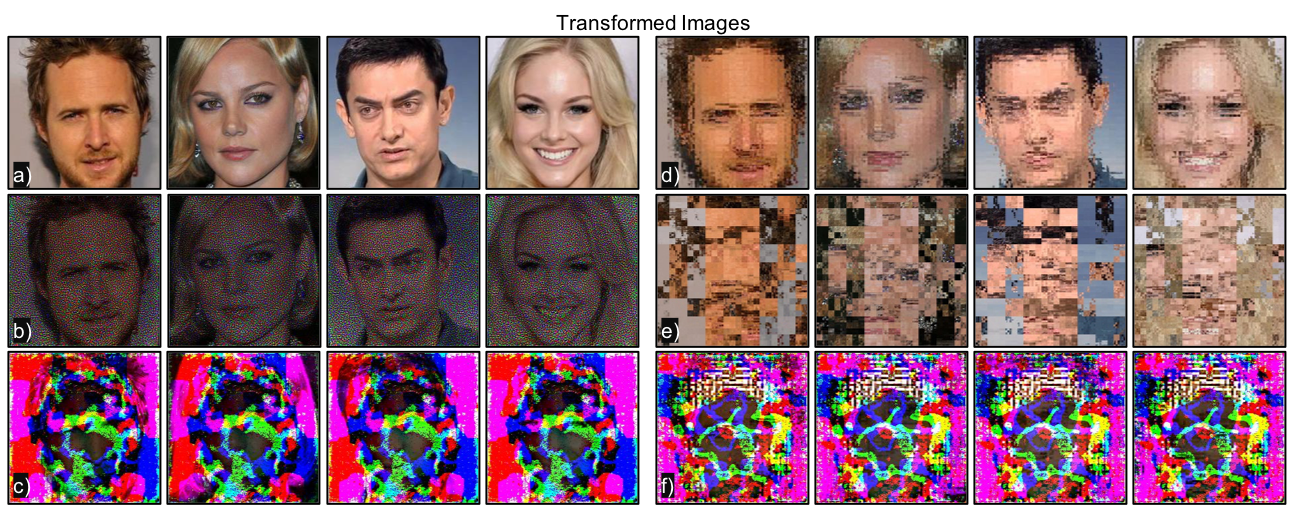}
    \caption{Image transformations. a) Reference images, b) Null-space learning, c) Trained-system learning, d) Geometric (permutation), low; e) Geometric (permutation), high; e) Combined geometric (low) and trained-system. Note that each transformation obscures the image differently.}
    \label{fig:transformations_long}
\end{figure*}



    

\subsubsection{Nullspace Learning for Frozen System}
\label{s:nullspace}

In Section \ref{s:results}, we pose a question asking if it is possible to identify an optical transformation that degrades an input image, while retaining performance of an off-the-shelf (i.e., pretrained) ML system.  In this experiment, we utilize a learned linear image transformation of image $I$ to degraded image $\hat{I}$, with gain and bias parameters $a$ and $b$, respectively.  For simplicity, in this experiment, we restrict learning to only the gain parameter, $a$.  Note that we also performed experiments with bias and gain/bias, which generated similar conclusions.  We also enforce a constraint projection such that $\hat{I}$ is in the integer range of $[0,255]$.

\begin{equation}
    \label{eqn:mask_model}
    \hat{I} = aI + b
\end{equation}

Next, we define an adversarial loss that combines the primary task and an adversarial task, where the primary task is face identification and the adversarial task is image reconstruction (human perception).  This loss measures the performance of the target task relative to the adversarial task, such that this loss is minimized when the target task performance is maximized and the adversarial task performance is minimized.

\begin{equation}
    \label{eqn:loss_fcn}
    loss = L_{primary} + L_{adversarial}
\end{equation}

For a frozen ML system, the weights of the network cannot be modified.  Intuitively, we know that in order to retain matching performance, the convolutional responses of the network must be preserved. Therefore, our primary loss ($L_{primary}$) is defined as the $L_2$ difference of network $\mathcal{N}$ at layer $k$.  The idea here is that if the convolutional responses at layer $k$ are preserved, the downstream network responses will also be preserved.

\begin{equation}
    \label{eqn:null_space}
    L_{primary} = || \mathcal{N}_k(I) - \mathcal{N}_k(I_m)||_2
\end{equation}

The adversarial loss ($L_{adversarial}$) is the compliment of the SSIM function, since our goal is to minimize the SSIM towards zero.

\begin{equation}
    \label{eqn:adv_loss}
    L_{adversarial} = 1 - SSIM(I, I_m)
\end{equation}

The middle-left set of images in Figure \ref{fig:transformations_long} illustrate an example of a nullspace learned image transformation for the face identification task, where the primary task loss was applied at the ``conv5'' layer of the network.  The reference (i.e., unmodified) images are in the upper left.  Note that the transformed images are minimally degraded; they appear darker and with some high frequency noise. This transformation is not privacy preserving.  Metrics from this experiment are a Rank-1 classification accuracy of 0.921 and an SSIM value of 0.065.  These metrics are also listed in the ``Null Space'' row of Table \ref{tab:summary_table} and the training output can be viewed in Figure \ref{fig:rank1_summary}.  We emphasize that for this mask model, we were not able to learn a transformation with a lower SSIM (human perception loss) that achieved reasonable performance at the identification task.



\subsubsection{Trained System Adversarial Learning}
\label{s:adversarial_learning}

\begin{figure}[t!]
	\centering
	\includegraphics[width=\textwidth]{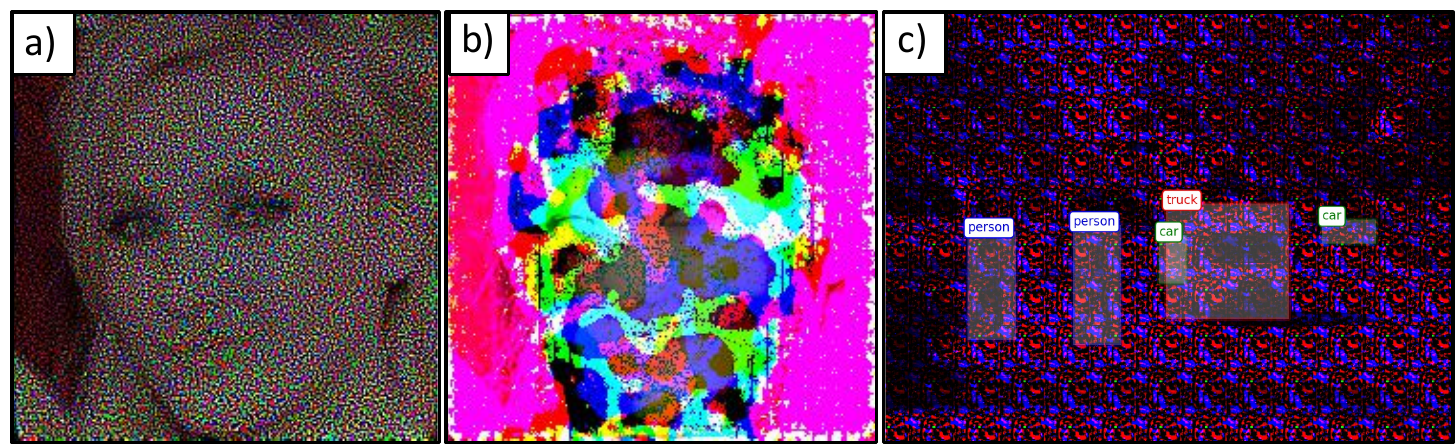} 
	\caption{Adversarial learning examples. (a) Nullspace learning (b) Trained system learning (c) Tiled trained system learning for object detection. These learned degradations exhibit a clear utility/privacy tradeoff for face identification and object detection.}
    \label{f:adversarial_learning_examples}
\end{figure}

The previous experiment provides evidence that there \textit{does not} exist an optical transformation that can degrade an input image, while preserving performance of a pre-trained ML system.  The logical next question is to pose the question asking if such an optical transformation exists if we also jointly minimize a ML task loss (as in $\S$\ref{s:results}, \textit{Trained System}).  If we can accomplish this joint learning task, a key-net would also be unnecessary, as a fine-tuning exercise would be sufficient.  In this section, we report on such an experiment. 

Out joint trained system retains the same mask model as the previous experiment.  That is, the image transformation is linear (\ref{eqn:mask_model}) without a bias parameter.  As with the nullspace learning experiment, we utilize an adversarial loss function, which is now regulated by an $\beta$ parameter and switches between values of $0$ and $1$.  When $\beta=0$, the network weights are updated for the primary task and when $\beta=0$, the transformation weights are updated for the adversarial task.  This approach was considered to ensure degradation of the image would not be skewed toward background content.  Note that for the primary task, the switching point occurred when the cross-entropy loss on the training set decreased below a value of 0.1 (i.e., saturated).  For the adversarial task, the switching point occurred when the Rank-1 classification rate on the validation data decreased below 86\%.  

\begin{equation}
    \label{eqn:loss_fcn_trained}
    loss = \alpha L_{primary} + (1-\beta)L_{adversarial}
\end{equation}

Since in this experiment the weights of the ML system are being learned, we must adjust the $L_{primary}$ loss term accordingly.  For the face identification task, we set this loss term to be the cross-entropy loss function.  The adversarial loss term, $L_{adversarial}$, retains unchanged (\ref{eqn:adv_loss}).  The bottom-left set of images in Figure \ref{fig:transformations_long} illustrate an example of a transformation learned from this approach (the reference images are in the upper left).  Here, we see a considerable increase in the magnitude of the transformation.  Arguably, it may be possible to infer that the example images are of faces, but it is very difficult to deduce the identity, even when provided the reference image data.  However, the trade-off for this transformation is a slightly reduced identification rate.  In this experiment, we achieved a rank-1 accuracy of 0.892, before the adversarial loss saturated with an SSIM value of 0.007.  This data is also reported in the ``Trained System'' row of Table \ref{tab:summary_table} and the training output can be viewed in Figure \ref{fig:rank1_summary}.  Note that the Rank-1 accuracy from training only slightly decreases as a function of training time. This behavior is due to the cyclical nature of the joint optimization.



\subsubsection{Geometric Trained System}
\label{sec:geometric_trained_system}




In the previous two experiments, the image transformation directly modified the value(s) of the image data.  This is not the only mechanism for generating an image transformation.  As described in $\S$\ref{s:generalized_doubly_stochastic_matrices}, we can also create a permutation matrix to ``shuffle'' the image data, which can destroy visual cues for human identification.  Intuitively, we expect that permuting an image would cause a pretrained ML algorithm to fail at its task on this type of data.  In this section, we explore whether it is possible to finetune a pretrained network to perform its primary task, except with permuted image data. 

In this experiment, the actual image transformation is not learned. Instead, the transformation function was carefully crafted to minimize convolutional responses specifically from the VGG-16 network.  The transformation function used permutes blocks of neighboring pixels globally an locally.  The global transformation is a constant translation for each block. The local transformation is a localized permutation within the image block.  This approach also allows us to regulate (or parameterize) the ``amount'' of shuffling that is applied.  Optically, this mask model is a surrogate for a custom optical element utilizing fiber bundles. The top-right and middle-right set of images in Figure \ref{fig:transformations_long} illustrate examples of permuted images using a low-shuffling (top-right) and a high-shuffling (middle-right) approach.  Note that the low-shuffling approach appears similar to a blurring function.  These images are still very human recognizable. The high-shuffling approach however generates images that are not human recognizable.  As with the trained system adversarial learning results, it may be possible to deduce that these (permuted) images are of faces, but it is not possible to infer identity.  We observed that after finetuning the VGG-16 network, the validation accuracy saturated at approximately 94\% and 83\% for the low-shuffling and high-shuffling approaches, respectively.  These metrics are also reported in Table \ref{tab:summary_table} in the ``Geometric (low)'' and ``Geometric (high)'' rows, respectively.  The training output can be viewed in Figure \ref{fig:rank1_summary}.  Note that for the ``high'' geometric transformation, the initial identification performance decreases to zero, but is quickly recovered (up to a point).  Again, we find there is a privacy trade-off between the achieved identification performance and the reconstruction loss.  We hypothesize that this loss in identification performance is due to violating locality of feature data and introducing edges from each local permutation block.

\subsubsection{Combined Trained System}

In this experiment, we combine the trained system approaches in $\S$\ref{s:adversarial_learning} and $\S$\ref{sec:geometric_trained_system}. Here, the finetuned network for a ``low'' geometric transformation is also trained to learn a degraded image (\ref{eqn:mask_model}).  We do not perform this experiment with the ``high'' geometric transformation because the identification performance is too low (83\%) and learning the image degradation would only further reduce performance.  The bottom-right row of images in Figure \ref{fig:transformations_long} illustrates examples of this transformation. This combination of transformations achieved a negligible difference in rank-1 identification accuracy (0.891 vs. 0.892) and SSIM value (0.004 vs 0.007) compared to the trained system approach without geometric permuting of the image data.  These metrics are reported in Table \ref{tab:summary_table} within the ``Combined'' row and the training output can be viewed in Figure \ref{fig:rank1_summary}.

\begin{table}[t]
    \centering
    \caption{Summary of baseline and optical transformation performance for the face identification task. }
    \begin{tabular}[t]{lcc} 
    Experiment & Rank-1 Accuracy & SSIM \\ [0.5ex] 
    \hline\hline
    None (Baseline) & 0.945 & 1.0 \\
    \hline
    Null Space & 0.921 & 0.065 \\ 
    \hline
    Trained System & 0.892 & 0.007 \\
    \hline
    Geometric (low) & 0.940 & 0.339 \\
    \hline
    Geometric (high) & 0.830 & 0.057 \\
    \hline
    Combined & 0.891 & 0.004 \\
    \hline
    \end{tabular}
    \label{tab:summary_table}
\end{table}

\subsubsection{Object Detection}
\label{s:object_detection}

\begin{figure}[t]
	\centering
	\includegraphics[scale=0.35]{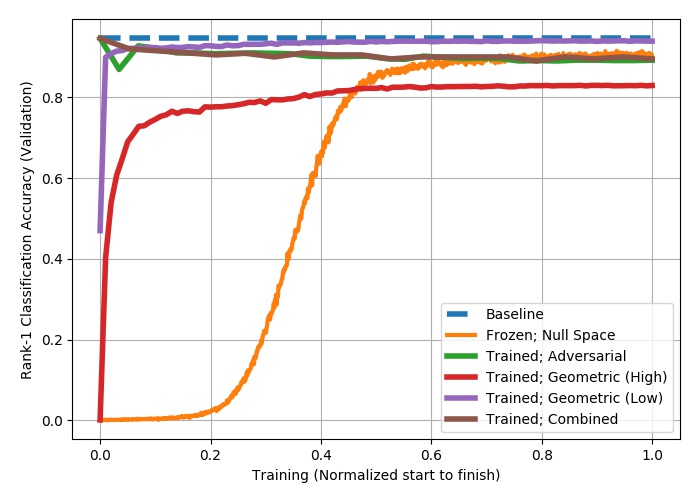}
	\caption{Summary of Rank-1 classification performance achieved for each face identification learning experiment vs. training run-time.  A privacy preserving ML system \textit{must} achieve similar performance to the unmodified baseline (dashed).  Note that \textit{all} of these approaches exhibit some amount of privacy-performance trade-off.} 
    \label{fig:rank1_summary}
\end{figure}

The previous experiments demonstrate that for the face identification task, it is possible to (digitally) apply an optical transformation to an image that strongly reduces human perception (via SSIM) with a small loss in identification performance.  Assuming the optical transformations are physically realizable, the image data is not representative of an end-to-end image acquisition to classification task.  In an end-to-end task, the face data must be detected from a raw, full-scene image, prior to classification.  As such, we performed an experiment to evaluate whether faces could be detected on transformed images.

In our preliminary face detection experiment, we applied the trained system optical transform ($\S$\ref{s:adversarial_learning}) to images in the VGG-Face-1 validation set and executed a face detector.  The face detector was based on the Faster-R-CNN convolutional network and trained to detect faces in natural images.  We observed a 0.0\% detection rate on the transformed images, which suggests that the detector must also be trained for an actual end-to-end system.


Next, we conducted an experiment to extend the trained system adversarial learning to a detection and classification task.  Here, The object detection system is the Faster R-CNN convolutional network, trained on the MS-COCO object dataset.  The primary machine learning task is localization and classification of 80 object classes (e.g. people, vehicles) and the adversarial machine learning task is structural similarity (SSIM) to degrade the image to reduce human perceptibility.  In our experiment, we considered eight total configurations. Each configuration is listed in Figure \ref{f:object_detection_results} (left).  Where denoted, ``tiled gain'' refers to the tiling of pretrained optical transforms from the face detection task. ``Joint gain'' denotes learning of an optical transform using the full scene image.  Evaluation metrics are: AP=average precision, AP for small or large objects only, Relative performance=AP ratio relative to baseline showing performance loss between experimental configurations and baseline, SSIM=structural similarity index. Examples of applied image transforms are illustrated in Figure \ref{f:object_detection_results} (right).  Results show that there is a strong trade-off in detection performance relative to the baseline.  Similar to the null-space experiments for face identification, as it was not possible to learn a full-scene optical transform that did not exhibit considerable performance loss. These results continue to suggest evidence that alternative training strategies are not sufficient for privacy preservation and our key-net approach is required.

\begin{figure*}[t!]
\includegraphics[width=\textwidth]{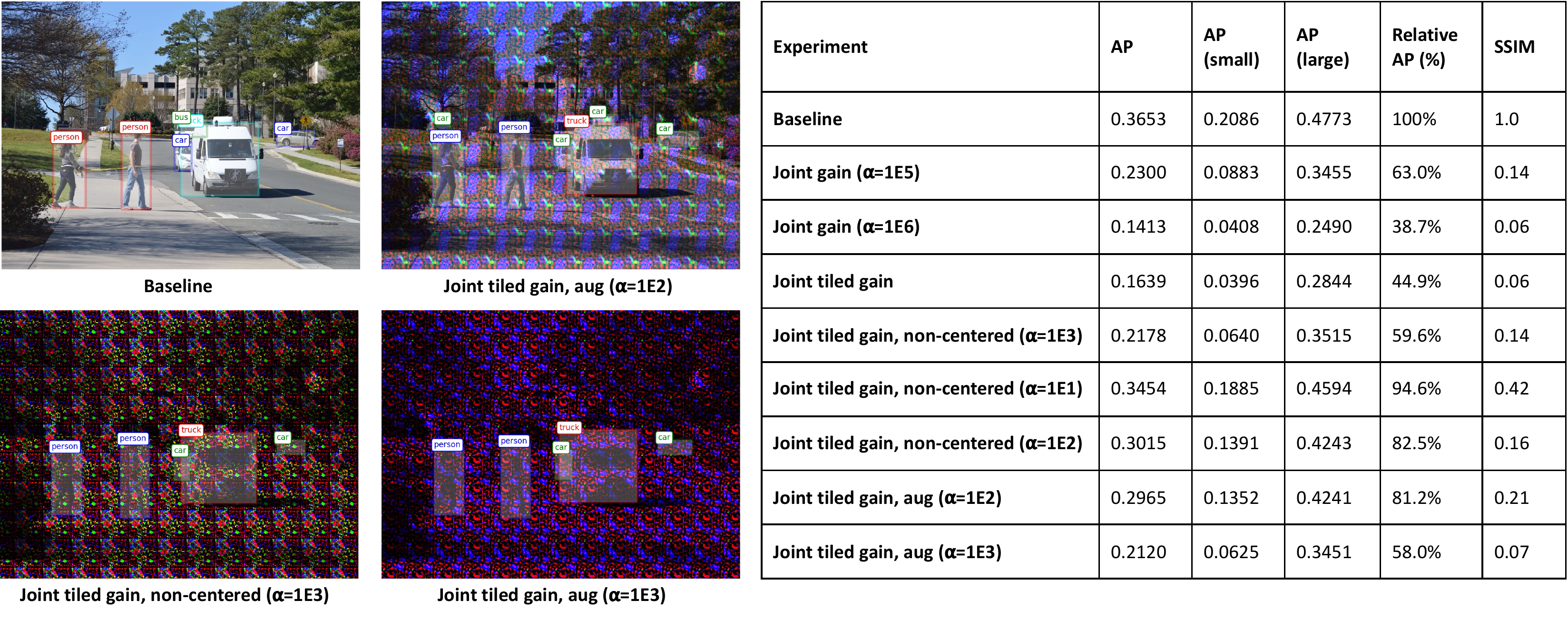}
\caption{Object detection training study.  Results with photometric optical transformation under different tiling and hyperparameter assumptions. See Section \ref{s:object_detection} for details. }
\label{f:object_detection_results}
\end{figure*}

\subsubsection{Summary}

In this section, we performed experiments that justify the necessity of a keynet architecture for a privacy preserving vision sensor ($\S$\ref{s:results}).  These experiments demonstrate that traditional conv-net architectures cannot be refactored to be privacy preserving with indirect (e.g., nullspace learning) or direct (e.g., joint adversarial learning, geometric data permutations) training of the ML algorithm. In each example there is a clear limit and tradeoff on the extent of human perception loss and performance of the primary ML task.  This is evidenced in Table \ref{tab:summary_table}, which reports the achieved Rank-1 accuracy (primary task metric) and SSIM (human perception loss metric) for the face identification task.  In contrast, a key-net does not inherit this privacy-performance tradeoff as its design is fully homomorphic ($\S$\ref{s:otcn}).


\subsection{Privacy Analysis}
\label{s:privacy_analysis}

In this section, we discuss keynet privacy.  First, we connect the problem of recovering source conv-net weights to the problem of non-negative matrix factorization.  Next, we show that the form of encryption we pose is an example of the Hill cipher, a classic cryptosystem based on linear algebra.  Finally, we discuss the primary concern on semantic security, and introduce a challenge problem for the community to analyze it.    

\subsubsection{Non-negative Matrix Factorization}
\label{s:nmf}

Non-negative matrix factorization (NMF) \cite{lee99} is defined as follows.  Given a matrix $V=WH$, factor $V$ into terms $(W,H)$ subject to the constraint $(W,H) \geq 0$, such that elements of the factors are non-negative.  Non-negative matrix factorization in general is NP-hard, with special polynomial time factorizations where $V$ is known low rank.  

Let $AWA^{-1}$ be grouped as $A(WA^{-1})$.  In general, for positive definite matrix $A$ with non-negative entries, the inverse $A^{-1}$ will not be non-negative.  Let $B = (WA^{-1})$, then $B$ can be decomposed
elementwise into the sum of non-negative terms as $B=B_p - B_n$ where $B_{p_i}$=0 if $B_i<0$ else $B_i$ ($B_{n_i}$=0 if $B_i>0$ else $-B_i$, resp.).  Then,
\begin{eqnarray}
    \hat{W} = A(B_p-B_n) \\
    \hat{W} = AB_p - AB_n
\end{eqnarray}
\noindent which transforms the matrix $\hat{W}$ into the sum of products of non-negative matrices.  The elements of $A$ are non-negative by assumption, and the elements of $B_n$ and $B_p$ are non-negative by construction, so then factorization of $AB_p$ or $AB_n$ reduces to non-negative matrix factorization to recover the desired non-negative factor $A$, which can be used to recover $A^{-1}$ and $W$.  An efficient solution to this factorization requires a polynomial time solution to non-negative matrix factorization, for which exact NMF is NP-hard for full rank matrices \cite{Arora2011ComputingAN}\cite{Vavasis2007OnTC}.   Finally, in the case where exhaustive search is possible for ``small'' matrices $V$, NMF in general is non-unique unless further constrained \cite{Huang2014NonNegativeMF}.  So, even if NMF is feasible, the matrix decomposition to recover exactly $A$ is still infeasible.

\subsubsection{Hill Cipher}

The form of optical transformation described is known in the cryptographic literature as a Hill cipher \cite{Hill29}.  The Hill cipher is a classic cryptosystem based on a linear transformation matrix as secret key.  Transformed images ($Ax$) are robust to cryptanalysis and can be safely made public, as long as the key $A$ is kept secret.  Furthermore, as described in section \ref{s:nmf}, the product $AWA^{-1}$ is also secure to known ciphertext attacks, due to the hardness of non-negative matrix factorization.  This enables public disclosure of both optical transformed images and key-nets, while ensuring security of raw images and source network weights.

However, the Hill cipher exhibits a two known weaknesses in the form of {\em chosen plaintext} and {\em chosen ciphertext} attacks. In a chosen plaintext attack scenario, the unknown $A$ can be recovered through least squares regression with at least $N$ tuples $(x, Ax)$, for $A$ with known sparsity $|A|_0=N$.  However, this requires that the attacker has physical access to the sensor, and in this scenario, privacy has already been compromised.  
The sensor can be assumed to be locked in a private space such as the home, with physical access restricted to authorized users, so tuples $(x,Ax)$ cannot be collected by policy.  

The Hill cipher also exhibits a weakness to {\em chosen ciphertext attack}. In this attack scenario, the adversary is provided decryptions $A^{-1}y$ of a chosen ciphertext $y$.  Like the chosen plaintext attack, the unknown $A^{-1}$ can be estimated using least squares regression.  However, the key-nets will not be used in this scenario by design, as the image does not require decryption and the output inferences can be public.   
So, while the Hill cipher does have a weakness as a general cryptosystem, we believe it is an appropriate and practical assumption for a privacy preserving vision sensor.  

Finally, the most challenging requirement is proving {\em semantic security}.  Semantic security is the problem of exposing information about the plaintext given only the ciphertext.  For example, in a key-net consider the case where the optical transformation function is the identity matrix.  The resulting key-net is exactly the source network, and the encrypted images are identical to the raw images.  Clearly, this provides no security.  A more subtle challenge for semantic security is when the optical transformation is a diagonal matrix or a permutation matrix. In section \ref{s:semantic_security}, we discuss that these transformations exhibit a semantic security weakness, which exposes the structure of $\hat{W}$ to attack.  We discuss that using the generalized stochastic matrix with privacy parameter $\alpha > 1$ shows promise to defend against this attack.

\begin{figure*}[t!]
\includegraphics[width=\figCwidth]{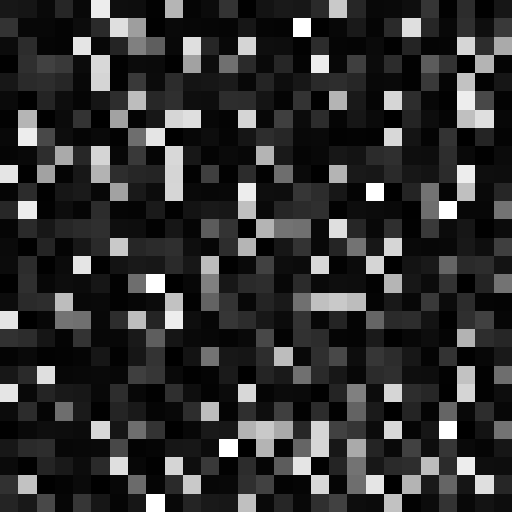}
\includegraphics[width=\figCwidth]{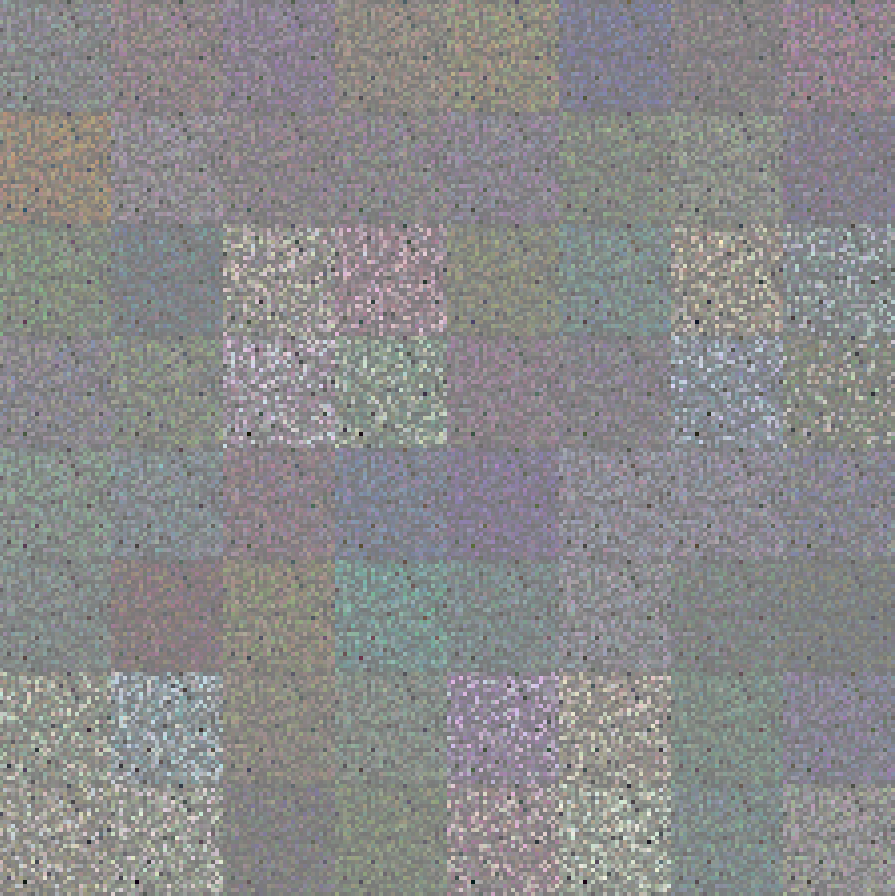}
\caption{Keynet challenge problem.  These images contain a secret message.  We will release these images along with their paired keynet to challenge the research community to discover a weakness in semantic security of our proposed approach.}
\label{f:keynet_challenge}
\end{figure*}

\subsubsection{Semantic Security}
\label{s:semantic_security}

Semantic security is the problem of exposing information about the plaintext given only the ciphertext.  A subtle challenge with semantic security considers the case where the degradation is either a scaling or a permutation rather than a generalized stochastic matrix.  In the case of a scaling, the weakness leverages natural image statistics for recovery, such that gradients are sparse for neighboring pixels (e.g. images are smooth almost everywhere).  For example, blind deconvolution techniques with Total Variation (TV) regularization can be used in some cases to jointly recover the unknown degradation kernel and the original image.  For key-net attacks, the concept is to leverage the distribution of sparse gradients in natural images, which can be used to regularize this ill-posed problem to recover the unknown image mask and raw image.  Future work will consider these different optimization strategies to determine conditions for which image reconstruction using this strategy is feasible. 


A second subtle challenge for semantic security is when the optical transformation is a permutation matrix.  In this case, the neighborhood structure of a convolution is present in the non-zero structure of the Toeplitz matrices in the key-net.  The keyed layers of the key-net are public information, so the sparsity structure of the weights can be inspected and used by an attacker.  For example, there exists a greedy optimization based on graph embedding to recover the structure of a permuted image with known neighbors simply by minimizing the pairwise embedding distance of pixels.  This is analogous to ``puzzle solving'', with the simplification that puzzle piece neighbors are observable in the sparsity pattern in the Toeplitz matrices which implement keyed convolutions.  This is not a risk if the key-net is kept private, but if the key-net is public, then $\hat{W}$ exposes private information about $Ax$.  Introducing the privacy parameter $\alpha$ can mitigate this attack by making the neighborhood structure ambiguous by increasing the sparsity of $W$ by a user specified privacy factor that is independent of the true neighborhood structure.  This introduces a tradeoff between inference runtime/memory and privacy that mitigates this attack.  Furthermore, combining the permutation with an analog scaling and bias results in limiting the attack due to natural image statistics.  Future work will investigate the feasibility of this style of attack for key-net images as a function of $\alpha$.

Finally, the conditions listed in Section \ref{s:optical_transformation} are sufficient, but not necessary.  Future work will explore alternative selections of the image key $A_0$ that are positive semi-definite.  In this case, the sensor observation cannot be inverted to recover the image, even under a plaintext attack, since the least squares optimization is under determined.  In the key-net framework, we would set $A_0^{-1}=I$ and continue the key-net encoding as currently described.  This would further protect against semantic security attacks, but would likely introduce a utility/privacy tradeoff which would degrade the trained ML task performance as $A_0$ becomes increasingly rank deficient.

\subsubsection{Challenge Problem}
\label{s:keynet_challenge}

Finally, we plan on publicly releasing the challenge images in figure \ref{f:keynet_challenge} and associated public key-nets for a lenet and vgg-16 topology.  These challenge images contain a secret message that can only be discovered by exploting a weakness in semantic security.  We would like to encourage the community to collaborate to discover such weaknesses in our approach by sponsoring a prize challenge.  These images and public keynets are available for analysis at \href{https://visym.github.io/keynet}{https://visym.github.io/keynet}.

\end{document}